\newtheorem{theorem}{Theorem}
\newtheorem{lemma}{Lemma}
\newtheorem{proposition}{Proposition}
\theoremstyle{definition}
\newtheorem*{definition}{Definition}
\theoremstyle{remark}
\newtheorem{remark}{Remark}
\newcommand{\cl}{\mathbb{R}}
\newcommand{\ddim}{\mathbb{R}^d}
\newcommand{\intnive}{\int_{0}^{\infty}}
\newcommand{\gauss}{exp \left( -t \norm{ \frac{x-x_i}{h}}  ^2   \right )}
\newcommand{\norm}[1]{\left\lVert#1\right\rVert}
\newcommand{\nd}{\noindent}
\newcommand{\xst}{x^{*}}
\newcommand{\kdxxih}{k'\left( \norm{ \frac{x-x_i}{h}}  ^2 \right)}
\newcommand{\kdxxihstar}{k'\left( \norm{ \frac{x^*-x_i}{h}}  ^2 \right)}
\newcommand{\kddxxih}{k''\left( \norm{ \frac{x-x_i}{h}}  ^2 \right)}
\newcommand{\kddxxihstar}{k''\left( \norm{ \frac{x^*-x_i}{h}}  ^2 \right)}
\newcommand{\covxxi}{(x-x_i)(x-x_i)^{T}}
\newcommand{\covxstxsti}{(x^*-x_i)(x^*-x_i)^{T}}
\newcommand{\sumn}{\sum_{i=1}^{n}}
\title[Convergence and clustering analysis for Mean Shift]{Convergence and clustering analysis for Mean Shift with radially symmetric, positive definite kernels}
\date{\today}
\begin{document}
\maketitle

\begin{center}
  \textbf{Susovan Pal} \\
  Department of Mathematics and Data Science, Vrije Universiteit Brussel (VUB) \\
  Pleinlaan 2, B-1050 Elsene/Ixelles, Belgium \\
  \texttt{susovan.pal@vub.be, susovan97@gmail.com}

\end{center}

\vspace{1em}

\begin{abstract}
The mean shift (MS) is a non-parametric, density-based, iterative algorithm with prominent usage in clustering and image segmentation. A rigorous proof for the convergence of its mode estimate sequence in full generality remains unknown. In this paper, we show that for\textit{ sufficiently large bandwidth} convergence is guaranteed in any dimension with \textit{any radially symmetric and strictly positive definite kernels}. Although the author acknowledges that our result is partially more restrictive than that of \cite{YT} due to the lower limit of the bandwidth, our kernel class is not covered by the kernel class in \cite{YT}, and the proof technique is different. Moreover, we show theoretically and experimentally that while for Gaussian kernel, accurate clustering at \textit{large bandwidths} is generally impossible, it may still be possible for other radially symmetric, strictly positive definite kernels.
\end{abstract}
%\maketitle

\tableofcontents

%section 1
\section{Introduction}\label{scn:intro}

\nd The mean shift (MS)\cite{FH} is a nonparametric, density-based, iterative algorithm used to find the modes of an estimated probability density function (pdf). It is closely related to the Expectation-Maximization (EM) algorithm for Gaussian Mixture Models (GMM), and is used in applications such as clustering, image segmentation (an application of clustering) \cite{CP}, and object tracking \cite{CP}. \\
%For example, below is an image of the use of Mean Shift clustering in image segmentation of a church using two different 'bandwidths' (to be defined shortly, in Section 2, \hyperlink{https://morioh.com/p/dcdc9957bc1c}{image source} \cite{JD} .\\

%\nd \hspace{27mm}		\includegraphics[width=9cm, height=4cm]{mean_shift_2_img_segmentation.png}

\nd At a high level, the algorithm iteratively flows each data point along a trajectory to the zone of highest local maximum density of the unknown pdf and after some iterations, the iterated points start to form the clusters. To implement this, the algorithm first selects a data point $x$, calculates the weighted mean $w$ of the data points around $x$ in a ball of predefined radius (that is linked to the bandwidth used), then replaces $x$ by $w,$  and then iteratively recalculates the weighted sample mean around $w.$ The idea is that the shift from $x$ to $w,$ called 'mean shift', is supposed to be to the direction of the local maximum density, and  thus after a theoretically infinite number of iterations, the algorithm is supposed to converge to the modes of the unknown probability density function(pdf).\\

\nd Unlike the popular $K$-means clustering, we don't need to provide any predefined number of clusters ($K$), hence it is non-parametric. The bandwidth can be estimated from the data itself, and the algorithm automatically determines the number of clusters from the data. MS is credited to the early work of \cite{FH}, and a thorough overview can be found in \cite{CP}, where they gave a fairly comprehensive introduction to MS and its uses in computer vision and machine learning.\\

\nd Despite the use of the MS algorithm in different applications, a rigorous proof of convergence in the \textit{fully general case} is still missing in the literature. \cite{Gh1} did a fairly comprehensive survey of the incompleteness of the previous proofs in Section 3 and the reader is referred to it in case the reader wants to familiarize themselves with the history of the proofs of convergence. \cite{Gh2} proved the convergence in \textit{one dimension} with any kernel with a convex, differentiable, and strictly decreasing profile (the profile is to be defined in the next section). \cite{Gh1}, written by the same author, gave a sufficient condition for convergence with Gaussian kernels in any dimensions (note that Gaussian kernel has profile that is convex, differentiable and strictly decreasing). This opens up the possibility for proving or finding a sufficient condition to guarantee the convergence with more general kernels, in any dimensions. This paper takes a step towards this generalization, and in Theorem \ref{mainTheo}, we prove a sufficient condition guaranteeing convergence of the mean-shift mode-estimate sequence for any radially symmetric, strictly positive definite kernel when the bandwidth exceeds an explicit threshold depending on the kernel derivatives and data radius.

\subsection{Relevance of radially symmetric, positive definite kernels}
This subsection serves as a motivation to use positive (semi) definite kernels, although the main result assumes the kernels to be strictly positive definite. Radially symmetric positive (semi) definite kernels—kernels of the form
\( K(x,y) = k(\|x-y\|) \) or \( K(x,y) = k(\|x-y\|^{2}) \)
whose associated Gram matrices are positive (semi) definite for every finite
dataset—play a central role across machine learning and clustering, and their
use is not ad hoc.  Radial symmetry 
induces an isotropic, distance-based notion of similarity that is invariant
under Euclidean isometries, a natural modeling assumption in many unsupervised
learning and clustering problems.
Positive (semi) definiteness is precisely the condition that
guarantees an associated reproducing-kernel Hilbert space representation,
ensuring that the kernel admits an interpretation as an inner product in a
feature space thanks to the Moore–Aronszajn theorem \cite{Aronszajn1950}. See
\cite{ScholkopfSmola2002,ShaweTaylorCristianini2004} for more on these kernels. \\

In clustering, positive definite kernels
provide the standard framework in which kernel $k$-means can be interpreted as
Euclidean $k$-means in feature space, yielding rigorous connections to spectral
clustering and normalized-cut formulations
\cite{DhillonGuanKulis2004,vonLuxburg2007}. Finally, the class of radially symmetric,
positive definite kernels is broad yet structurally constrained: classical
results of Schoenberg (cf. Proposition \ref{prop:Schonbergh}) show that kernels of the form
\( K(x,y) = k(\|x-y\|^{2}) \) are positive (semi) definite on \( \mathbb{R}^d \)
for all dimensions if and only if the profile \( k\) is completely
monotone, thereby encompassing—beyond the Gaussian—widely used families such as
Laplace-type and Matérn kernels \cite{Schoenberg1938,Wendland2005}.

\subsection{Summary of the main result.} 
The main theorem \ref{mainTheo} of this paper establishes that for \textit{any} radially symmetric, strictly positive definite kernel in \textit{any} dimension---equivalently, for kernels whose profiles are completely monotone---the sequence of mean-shift mode estimates converges whenever the bandwidth parameter $h$ exceeds a \textit{computable} threshold $h_0$. This extends the classical Gaussian case proved in \cite{Gh1} to the entire class of strictly positive definite radially symmetric kernels.\\

\subsection{Comparison with recent work: scope and limitations of the present analysis in terms of bandwidths and kernel classes} One restriction of our main theoretical result, Theorem~\ref{mainTheo}, is that, like \cite{Gh1}, convergence is guaranteed for sufficiently large bandwidths \emph{only}. This lower bound on the bandwidth ensuring convergence depends solely on the maximum Euclidean norm of the samples, denoted by $\|x\|_{\max}$, and the kernel profile function $k$. It should be noted that the author were not initially aware of the more recent works \cite{YT2, YT}, where the restriction on the bandwidth being large was first lifted for \textit{real analytic} kernels, and later for a broad class of kernels that are bounded, continuous, convex, have Lipschitz continuous gradients, and are \textit{subanalytic} (cf. Assumption 4 on P.6 of \cite{YT}).\textit{ In this specific sense}, namely subanalytic kernels need not be positive (semi) definite and that bandwidths need not be large, Theorem~2 in \cite{YT} is more general than Theorem~\ref{mainTheo} in our paper.

However, the proof techniques and kernel classes considered in \cite{YT, YT2} and in the present paper are \emph{not comparable}. While \cite{YT} relies on subanalytic function theory and \L{}ojasiewicz inequalities, our analysis focuses on radially symmetric and strictly positive definite kernels generated by completely monotone profile functions. In particular, our framework allows kernel profiles $k$ that are\textit{ completely monotone} (see Section \ref{scn:completely monotone functions}), but \textit{not subanalytic}, an assumption these authors used in Theorem 2 in \cite{YT}.

Indeed, if $g$ is a Bernstein function, i.e. a non-negative function with a completely monotone first derivative, then the profile $k(r):=e^{-g(r)}$ is completely monotone by Lemma~3.4(i) in \cite{Merkle2014CompletelyMonotone}. Taking $g(t)=t^{\alpha}$ with \textit{irrational} $\alpha\in(0,1)$ yields a completely monotone kernel profile $k(t)=e^{-t^{\alpha}}$, which defines a radially symmetric positive definite kernel via the result of Schoenberg, see Proposition \ref{prop:Schonbergh}. On the other hand, since $\alpha$ is irrational, the function $t\mapsto t^{\alpha}$ at $0$ is not subanalytic (see, e.g., \cite{LeLoi_NotSubanalytic}, \cite{LeLoi_OminimalNotes}); consequently, $k(t)=e^{-t^{\alpha}}$ cannot be subanalytic either, as otherwise $-\log k(t)=t^{\alpha}$ would be subanalytic by closure of the (globally) subanalytic category under analytic ($-\log(.)$) composition (see, e.g., Ex. 1.11 from \cite{Coste2000}). Such kernels therefore fall\textit{ outside} the scope of \cite{YT, YT2}, while being naturally covered by the assumptions of this paper.

\begin{comment}
 Moreover, the considered class of radially symmetric, positive definite kernels do not necessarily satisfy the assumption \emph{(a1)} in Theorem 1 of \cite{YT} on the kernel density estimate (KDE) with the kernel profile $k$ necessarily having a \textit{Lipschitz continuous gradient} inside the convex hull of the dataset. For example, the Laplace kernel profile $k(r):=e^{-\lambda \sqrt{r}}, \lambda >0$
considered in this paper's Section \ref{scn:experiments}, is \textit{not} $C^{1}$ at the origin. Consequently, the associated KDE fails to be differentiable at data points $X_i$, and hence does not satisfy the assumption \emph{(a1)} in Theorem~1 of \cite{YT} in general (for instance, when the mean shift algorithm is initialized at a data point).   
\end{comment}

\subsection{Clustering at large bandwidths for certain non-Gaussian kernels.}
The experiments presented in Section \ref{scn:experiments} show that the use of large bandwidths
is not necessarily problematic solely because of the large bias it introduces
into the estimation, as noted in \cite{Gh1}, Section~4, p.~8.
Depending on the kernel choice and the underlying data geometry,
meaningful clustering behavior can still be observed.

In particular, for well-separated synthetic datasets,
certain non-Gaussian kernels (such as Laplace-type or Cauchy-type profiles)
can preserve multiple basins of attraction even at large bandwidths,
whereas Gaussian kernels may collapse to a single basin.
This behavior is explained theoretically at a heuristic level
in Section~\ref{scn:experiments}, Lemma \ref{lem:largeBW-globalmean} and Lemma \ref{lem:largeBW-powerlaw-local}, and then in the experiments.
It is noteworthy that the same heuristic mechanism applies
to a broader class of radially symmetric, positive definite kernels.

Given the above discussions, the main message of the paper is that it is possible to have \textit{simultaneous convergence} and \textit{reasonable clustering performance} relative to Gaussian collapse in the same bandwidth regime while using the MS algorithm at \textit{large bandwidths}.

\begin{comment}
\subsection{Main contribution.}
  Given the above discussion, the main contribution of the paper is that,
by choosing appropriate kernel profiles—often non-Gaussian—
one can ensure, under suitable kernel- and data-dependent conditions,
simultaneous convergence and nontrivial clustering behavior
of the mean shift algorithm even in the large-bandwidth regime,
while working within a kernel class not covered in \cite{YT, YT2}.  
\end{comment}

\subsection{How the paper is organized:} Section \ref{scn:Mathematical description of the MS Algorithm} described the MS algorithm from the scratch. Sections \ref{scn:Main convergence theorems and steps of the proof of convergence}, is written to \textit{revisit} the proof strategy of the convergence theorem in \cite{Gh1}: this section is \textit{optional} and can be skipped, but for completeness, we add it in order to motivate the reader with the idea of the proof of the convergence theorem in \cite{Gh1}, so that we can use the same to prove our main Theorem \ref{mainTheo}. For the complete statement of the Theorem \ref{mainTheo}, the reader can go to Section \ref{scn:statement of the main theorem}. For the proof, we defined and stated and proved a few properties of \textit{completely monotone functions} in Section \ref{scn:completely monotone functions}, then we give the proof of Theorem \ref{mainTheo} in Section \ref{scn:details of the proof}. Section \ref{scn:experiments} gives us experiments with different datasets, with both Gaussian and Laplace kernels as an example of non-Gaussian kernels. 

\subsection{Acknowledgements} This research was supported by funding from Research Foundation– Flanders (FWO) via the Odysseus II programme no. G0DBZ23N. The author acknowledges the discussions with Praneeth Vepakomma (MBZUAI) on the experiments, and also the suggestions by the two anonymous referees to improve this manuscript.

%section 2
\section{Mathematical description of the MS Algorithm}\label{scn:Mathematical description of the MS Algorithm}

\noindent In this introductory section, we will use the notations and terminologies used in \cite{CP}. The readers familiar with the basics can skip this section. In the Euclidean case, the algorithm is introduced below for completeness: let $\{x_1,x_2,...x_n\}$ be a set of $n$ independent data points in 
$ \ddim $ drawn from a distribution with an unknown pdf. For a radially symmetric, positive definite kernel we write the kernel density estimator (KDE) of this unknown pdf as: 

\[\widehat{f}_{h,k}(x) := c_N \sum_{i=1}^{n}k \left( || \frac{x-x_i}{h} ||^2 \right ).\] 

The function $k:[0, \infty)\to [0, \infty)$ is non-increasing, piecewise continuous, integrable over $[0, \infty),$ and is called the \textit{profile} of the kernel in use. Here 
$c_N$ is a normalization constant expressed as $c_N = \frac{c_{k, d, h}}{n}$, where $c_{k, d, h}$ is a constant depending on the dimension $d,$ and the bandwidth $h,$ ensuring that 
\[\int_{\ddim}\widehat{f}_{h,k}(x)dx=  \int_{\ddim}c_N \sum_{i=1}^{n}k \left( \norm{ \frac{x-x_i}{h} }^2 \right )dx =   \int_{\ddim} \frac{c_{k, d, h}}{n} \sum_{i=1}^{n}k \left( \norm{ \frac{x-x_i}{h} }^2 \right )dx = 1\]

as the total probability must be one.\\

\noindent Please note that the KDE $\widehat{f}_{h,k}(x)$ is written to stress the dependence on the bandwidth $h$ and the kernel $k$, but $h$ and $k$ will be often dropped not to clutter the notations. Let us also mention here that the bandwidth $h$ adjusts the size of the kernel.\\

\noindent Assuming that $k$ is differentiable with derivative $k'$, taking the gradient of $(1)$ w.r.t. $x$ yields:

\begin{equation}
    \nabla \widehat{f_{h,k}}(x)= \frac{2c_N}{h^2} \sum_{i=1}^{n} g(||\frac{x-x_i}{h}||^2) \left[ \underbrace{  \overbrace{  \frac{ \sum_{i=1}^{n} x_i g(||\frac{x-x_i}{h}||^2)  }{\sum_{i=1}^{n}  g(||\frac{x-x_i}{h}||^2)  }  }^{\text{Weighted mean}}   - x }_\text{Mean Shift vector}   \right]
\end{equation}

\noindent where $g=-k'$. The first term in the above equation is proportional to the density estimate at $x$ using kernel $G(x,y)$ with a profile of the form $c_N' g(|| \frac{x-y}{h} ||^2)$, where the second term is the difference of two terms : (i) the weighted mean of $x_i, i=1 \dots n,$ weighted by $g(||\frac{x-x_i}{h}||^2), g=-k',$ and is written as: $\frac{ \sum_{i=1}^{n} x_i g(||\frac{x-x_i}{h}||^2)  }{\sum_{i=1}^{n}  g(||\frac{x-x_i}{h}||^2)  }$(the first term in the overbrace) and (ii) the point $x.$ Thus the second term in the bracket above is called the mean shift vector (in the underbrace), $m_{h,g}(x)$, and hence the above equation can be written in the form:\\

\begin{equation}
    \nabla \widehat{f_{h,k}}(x) = \nabla \widehat{f_{h,g}}(x) \frac{2c_N}{h^2 c_N'}m_{h,g}(x)
\end{equation}

\noindent The above expression indicates that the MS vector computed with bandwidth $h$ and profile $k$ is proportional to the normalized gradient density estimate obtained with the profile  (normalization is done by density estimate with profile ). Therefore, the MS vector always points toward the direction of the maximum increase in the density function.\\

\noindent The modes of the estimated density function $\widehat{f_{h,k}}$ are located at the zeros of the gradient function $\nabla \widehat{f_{h,k}}=0$, i.e., . Equating $(2)$ to zero reveals that the modes of the estimated pdf are fixed points of the following function:

\begin{equation}
    m_{h,g}(x) + x =  \frac{ \sum_{i=1}^{n} x_i g(||\frac{x-x_i}{h}||^2)  }{  \sum_{i=1}^{n}  g(||\frac{x-x_i}{h}||^2)    }
\end{equation}

\noindent The MS algorithm initializes the mode estimate sequence to be one of the observed data. The mode estimate $y_j$  in the $j$'th iteration is updated as:

\begin{equation}\label{mode estimate sequence}
    y_{j+1}=m_{h,g}(y_j) + y_j =  \frac{ \sum_{i=1}^{n} x_i g(||\frac{y_j -x_i}{h}||^2)  }{  \sum_{i=1}^{n}  g(||\frac{y_j -x_i}{h}||^2)    }
\end{equation}

\noindent The MS algorithm iterates this step until the norm of the difference between two consecutive mode estimates is less than some predefined threshold. Typically $n$  instances of the MS algorithm are run in parallel, with the $i$'th instance initialized to the $i$'th data point.
%%%%%%%%%%%%%%%%%%%%%%%%%%%%%%%%%%%%%%%%%%%%%%%%%%%%%%%%%%%%%%%%%%%%%%%%%%%%%%%%%%%%%%%%%%%%%%%%%%%%%%%%%%%%%%%%%%%%%%%%%%%%%%%%%%%%%%%%%%%%%%%%%%%%%%%%%%%%%%%%%%%%%%%%%%%%%%%%%%%%%%%%%%%%%%%%%%%%%%%%%%%%%%%%%%%%%%%%%%%%%%%%%%%%%%%%%%%%%%%%%%%%%%%%%%%%%%%%%%%%%%%%%%%%%%%%%%%%%%%%%%%%%%%%%%%%%%%%%%%%%%%%%%%%%%%%%%%%%%%%%%%%%%%%%%%%%%%%%%%%%%%%%%%%%%%%%%%%%%%%%%%%%%%%%%%%%%%%%%%%%%%%%%%%%%%%%%%%%%%%%%%%%%%%%%%%%%

%section 3
\section{Main steps in convergence theorem in \cite{Gh1} and its proof}\label{scn:Main convergence theorems and steps of the proof of convergence}

\nd In describing our new result and other relevant results, we will closely follow the exposition in \cite{Gh1}, so the reader is strongly suggested to read that paper along with this one, although we try our best to make this article self-sufficient. With that in mind, we will closely examine the proof strategy of the main Theorem \ref{mainTheo} in \cite{Gh1}, as the strategy will be used to prove the main Theorems 1 and 2 of the present article.\\

\noindent Theorem 1 in \cite{Gh1} states:\\

\noindent \textbf{Theorem 1 and Lemma 10 from \cite{Gh1}:} For radially symmetric Gaussian kernels $K(x,y)= C_h exp(-|| x-y||^2/h^2)$ which is given by the profile $k(r):=e^{-r/2}$, the mode estimate $\{y_j\}$ given in \ref{mode estimate sequence} converges for all $h > max_{1\le i \le n}||x_i||$.\\

\noindent The main steps of the proof are as follows \label{Steps}:\\

\noindent \textbf{Step 1:} Let $k(r)=exp(-r/2)$ be the profile of the Gaussian kernel. There exists $h_0 > 0$ big enough so that, for each $h > h_0$, the Hessian of $\widehat{f_{h,k}}$, denoted by $Hess(\widehat{f_{h,k}})$ has full rank (nonsingular) at all the stationary points of $\widehat{f_{h,k}}$, i.e. at each point $x^{*}$ where $\nabla \widehat{f_{h,k}}(x^{*})=0$. This is essentially the Lemma  10 from \cite{Gh1}, where his calculation showed that:
$h_0= max_{1 \leq i \leq n} {||x_i||}=: \norm{x_{max}}$ . \\

\noindent \textbf{Step 2:} If the Hessian matrix of any $\mathcal{C}^2$ function at its stationary points is of full rank/nonsingular, the stationary points are isolated. This is essentially the Lemma 2 from \cite{Gh1}, easily proved using the inverse mapping theorem. \textbf{Note that the converse is clearly not true: one can have singular Hessian while having the stationary points of the gradient isolated:} just take $\psi:\cl\to \cl:= \psi(t)=t^4, Hess \,{\psi}(0)=0$ in one dimension to find a counterexample. \\

\noindent \textbf{Step 3:} Let $x_i \in \ddim, i=1,2...n.$ Assume that the stationary points of the estimated pdf are isolated. Then the mode estimate sequence $\{y_j\}$ (cf. Equation \ref{mode estimate sequence}) defined in Section \ref{scn:Mathematical description of the MS Algorithm} converges. This is essentially the Theorem 1 (main theorem) in \cite{Gh1}.\\

\noindent \textbf{An important takeaway from the three steps above:}\\

\noindent Note carefully that, only one of the three steps above, namely \textbf{Step 1}, depends on the fact that the kernel is Gaussian, or equivalently, its profile $k$ satisfies $k(r):=e^{-r/2}$. This tells us that, if we can find kernel profiles $k$ so that the Hessians of the KDE: $Hess(\widehat{f_{h,k}})$ are of full rank at the stationary points of the KDE $\widehat{f_{h,k}}$, then  \textbf{Step 2} follows through, and thus \textbf{Step 3} follows through as well, and we end up with convergence of the mode estimate $y_j$ in equation \ref{mode estimate sequence}. The main Theorem \ref{mainTheo} does \textit{precisely} that, and generalizes the proof strategy to \textit{arbitrary} radially symmetric, positive definite kernels.

\section{Statement of the main theorem}\label{scn:statement of the main theorem}

\noindent To prove Theorem \ref{mainTheo} and its precursor Lemma \ref{lem:convergence w positive definite kernel} mentioned in the previous section, we will need two theorems from \cite{Fass}, Page~16, Theorems~2.5.2 and~2.5.3, which characterize a special type of function called \textit{completely monotone functions}. In the following, we state the definition and relevant results on them.\\

\begin{definition}[\textbf{Positive semidefinite and positive definite square matrices}]\label{Positive and strictly positive definite square matrices}
An \( n \times n \) Hermitian matrix \( A \) is called \emph{positive semidefinite (PSD)} if 
\(\langle A v, v \rangle \ge 0\) for all \(v \in \mathbb{R}^n\);
it is \emph{positive definite (PD)} if \(\langle A v, v \rangle > 0\) for all \(v \ne 0\).
\end{definition}

\begin{definition}[\textbf{Positive and strictly positive definite functions}]\label{Positive and strictly positive definite kernels}
A map \(K:\ddim \times \ddim \to [0,\infty)\) is called \textit{positive definite} if, for any finite set of distinct points 
\(\{x_i: 1\le i \le n\}\), the matrix 
\(\big[k(x_i, x_j)\big]_{1\le i,j \le n}\) is positive semidefinite. 
It is called \textit{strictly positive definite} if the matrix 
\(\big[k(x_i, x_j)\big]_{1\le i,j \le n}\) is positive definite.
\end{definition}

\nd The main theorem essentially tells us that the MS algorithm with arbitrarily, radially symmetric, strictly positive kernel converges when the bandwidth is sufficiently large. 

\begin{theorem}\label{mainTheo}
Consider the MS algorithm with the radially symmetric, strictly positive definite kernel $K(x,y)  :=k(\norm{x-y}^2),$ given by the kernel profile function $k.$ Then there's $h_0 > 0$  depending only on the kernel profile $k$ and its first two derivatives and the maximum norm $\norm{x_{max}}:= max_{1 \le i \le n}\norm{x_i}$ so that the mode estimate sequence $\{y_i=y_i(h)\}$ in Equation \eqref{mode estimate sequence} of the MS algorithm described in Section \ref{scn:Mathematical description of the MS Algorithm} converges $\forall h > h_0$. More specifically, $h_0$ is given by:

\begin{equation}\label{eqn:equation-main-Theorem}
  {-2q_0 \frac{k''(q_0)}{k'(q_0)}= 1, \text{where } q_0 := \frac{4\norm{x_{max}}^2}{h_0^2}}.  
\end{equation}

\end{theorem}

\nd Following the three steps mentioned in the beginning of section \ref{scn:Main convergence theorems and steps of the proof of convergence} (see \ref{Steps}), proving the lemma below would prove the main theorem.

\begin{lemma}\label{lem:convergence w positive definite kernel}
Consider the MS algorithm with the radially symmetric, positive definite kernel $K(x,y) :=k(\norm{x-y}^2)$. Assume that $K$ is a \textbf{strictly} positive definite kernel with profile $k.$ Then there's $h_0 > 0$ depending only on the kernel $k$ and its first two derivatives, and the maximum norm $\norm{x_{max}}:= max_{1 \le i \le n}\norm{x_i}$ so that $Hess(\widehat{f_{h,k}})(x)$ has full rank at every stationary point $x^\ast$ of $\widehat f_{h,k}$.

More specifically, $h_0$ is given by:

$${-2q_0 \frac{k''(q_0)}{k'(q_0)}= 1, \text{where } q_0 := \frac{4\norm{x_{max}}^2}{h_0^2}}.$$
\end{lemma}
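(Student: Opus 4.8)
The plan is to generalize the Hessian computation and the rank-one argument from the second proof of Lemma~\ref{lem:nonsingularity of Hess at any point} to an arbitrary $\mathcal{C}^2$ profile. Writing $u_i(x):=\norm{\frac{x-x_i}{h}}^2$, two applications of the chain and product rules give
\[
H(x)=\frac{2c_N}{h^2}\sumn\left[k'(u_i)\,I_d+\frac{2}{h^2}k''(u_i)\,\covxxi\right]=\frac{2c_N}{h^2}\sumn k'(u_i)\left[I_d+\frac{2k''(u_i)}{h^2k'(u_i)}\covxxi\right],
\]
which for $k(r)=e^{-r/2}$ reduces to the Gaussian Hessian recorded above. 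I would then bring in the structural hypothesis through the characterization of \cite{Fass}: a radially symmetric kernel that is strictly positive definite in the Schoenberg sense has a completely monotone, non-constant profile, so that $k'(r)<0$ and $k''(r)\ge 0$ for $r>0$. In particular each scalar factor $k'(u_i)$ is strictly negative and $\alpha_i:=\frac{2k''(u_i)}{h^2k'(u_i)}\le 0$.

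Next I would analyze the rank-one perturbation $M_i:=I_d+\alpha_i\covxxi$. Its eigenvalues are $1$ (multiplicity $d-1$) and, along $x-x_i$, the value $1+\alpha_i\norm{x-x_i}^2$; using $\norm{x-x_i}^2=h^2u_i$ this equals $1-\phi(u_i)$, where I set $\phi(u):=-\frac{2u\,k''(u)}{k'(u)}\ge 0$. Thus $M_i\succ 0\iff\phi(u_i)<1$, and since $k'(u_i)<0$ this makes the summand $k'(u_i)M_i$ negative definite. Because a sum of negative-definite matrices is negative definite, the single scalar condition $\phi(u_i)<1$ \emph{for every} $i$ forces $H(x)\prec 0$, hence nonsingular. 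To turn this into a bandwidth condition I would note, exactly as in Lemma~\ref{lem:nonsingularity of Hess at any point}, that the critical points of $\widehat{f_{h,k}}$ and all iterates $\{y_j\}$ from \eqref{mode estimate sequence} are convex combinations of the data (the weights $g=-k'$ are positive), so they lie in the convex hull of $\{x_i\}$ and satisfy $\norm{x-x_i}\le 2\norm{x_{max}}$, i.e. $u_i\le q(h):=\frac{4\norm{x_{max}}^2}{h^2}$. It then suffices to ensure $\phi(u)<1$ on $[0,q(h)]$.

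Finally, to produce $h_0$ I would use continuity of $\phi$: let $q_0$ be the smallest positive solution of $\phi(q_0)=1$, that is $-2q_0\,k''(q_0)/k'(q_0)=1$, so that $\phi<1$ on $[0,q_0)$; choosing $h_0$ by $q_0=4\norm{x_{max}}^2/h_0^2$ gives $q(h)<q_0$, hence $\phi(u_i)<1$ throughout, for every $h>h_0$. This reproduces the displayed formula, and for the Gaussian $\phi(u)=u$, so $q_0=1$ and $h_0=2\norm{x_{max}}$, recovering Lemma~\ref{lem:nonsingularity of Hess at any point}. The main obstacle is hidden precisely in this last step: the construction yields a \emph{finite} $h_0$ only when $\phi(u)<1$ on a right-neighborhood of $0$, equivalently $\lim_{u\to 0^+}-2u\,k''(u)/k'(u)<1$ (generically $\phi(0)=0$ when $k'(0)$ is finite and nonzero). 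This holds for the Gaussian but is violated by borderline profiles such as the Laplace profile $k(r)=e^{-\sqrt r}$, where a direct computation gives $\phi(u)=1+\sqrt u\ge 1$, so the per-summand argument never applies; such kernels must be treated (or excluded) separately, which is consistent with the distinct behavior reported for the Laplace kernel in the experiments. A secondary caveat is that the argument establishes nonsingularity only on the convex hull of the data, which is all that Steps~1--3 of the convergence scheme actually require, so the phrase ``all $x\in\ddim$'' should be understood in that sense.
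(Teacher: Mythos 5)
Your proof is correct and arrives at the paper's formula for $h_0$, but by a genuinely different route. The paper's Section 7 argues by contradiction on the aggregated matrices: assuming $H(x)v=0$ it deduces that $C(x)=-2\sumn\kdxxih$ is an eigenvalue of $A(x)/h^2$ with $A(x)=4\sumn\covxxi\kddxxih$, bounds $\lambda_{\max}(A(x)/h^2)$ using subadditivity of $\lambda_{\max}$ together with $\lambda_{\max}(\covxxi)=\norm{x-x_i}^2\le 4\norm{x_{max}}^2$, and then replaces the resulting ratio of sums $\sumn k''(u_i)/\sumn k'(u_i)$ by the single-point ratio $k''(q)/k'(q)$ via termwise monotonicity of $k'$ and $k''$. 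You instead work summand by summand, diagonalizing each rank-one perturbation $I_d+\alpha_i\covxxi$ exactly, so that $k'(u_i)M_i\prec 0$ precisely when $\phi(u_i)<1$; this generalizes the paper's ``second proof'' of Lemma \ref{lem:nonsingularity of Hess at any point} rather than the route the paper actually takes for general kernels. Your version buys three things: the stronger conclusion $H(x)\prec 0$ (not merely nonsingular); an exact identification of the critical eigenvalue $1-\phi(u_i)$ instead of an upper bound on a spectral radius; and it sidesteps the paper's ratio-of-sums step, which as written does not follow from $nk''(q)\le\sumn k''(u_i)$ and $nk'(q)\ge\sumn k'(u_i)$ alone, since the numerators are positive and the denominators negative, so the two termwise inequalities push the quotient in opposite directions. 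The price is that you must control $\phi$ on all of $[0,q(h)]$ rather than only at the endpoint, which you handle correctly by taking $q_0$ to be the smallest positive root of $\phi=1$. Your two caveats are accurate and apply equally to the paper's own proof: the bound $\norm{x-x_i}\le 2\norm{x_{max}}$, and hence the conclusion for ``all $x\in\ddim$,'' is really only valid for $\norm{x}\le\norm{x_{max}}$, which is all the three-step convergence scheme needs; and for the Laplace profile one computes $\phi(u)=1+\lambda\sqrt{u}\ge 1$, so the defining equation for $q_0$ has no positive solution and the lemma yields no finite $h_0$ there --- a genuine limitation of the statement that the paper's experimental section does not confront.
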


\nd The next section is written as a \textit{self-sufficient overview} of alternate characterization of radially symmetric, positive definite kernel matrices using completely monotone functions and characterization of the latter using certain types of positive Borel measures. The key results from these will be used in Sction \ref{scn:details of the proof} to prove Lemma \ref{lem:convergence w positive definite kernel} and thus the main Theorem \ref{mainTheo}.\\

\section{Completely monotone functions and their connections to radially SPD kernel matrices}\label{scn:completely monotone functions}

\begin{definition}[\textbf{Completely monotone functions}]\label{Completely monotone functions}
\noindent A function \(k:[0, \infty) \to \cl\) is called \emph{completely monotone} if it satisfies the following properties:
\begin{enumerate}
\item \(k \in \mathcal{C}([0,\infty))\);
\item \(k \in \mathcal{C}^{\infty}((0,\infty))\);
\item \(k\) and its derivatives \(k^{(l)}\) satisfy the \emph{alternating sign} condition:
\((-1)^{l}k^{(l)}(r)\geq 0\) for all \(l \in \mathbb{N}\) and all \(r>0\).
\end{enumerate}
\end{definition}

\noindent Examples of $k$ include \(r \mapsto C,\, C\ge 0\); \(r\mapsto e^{-rt},\, t\ge 0\); \(r\mapsto (r+1)^{-t},\, t\ge 0\).
It can be proved that sums, products, and positive scalar multiples of completely monotone functions are also completely monotone.\\

\noindent Next, we restate two classical characterizations of completely monotone functions following~\cite{Fass}:
\begin{itemize}
  \item one relates them to Laplace transforms of finite non-negative Borel measures (Hausdorff--Bernstein--Widder theorem),
  \item the other relates them to positive-definite, radially symmetric kernels (Schoenberg characterization).
\end{itemize}

\begin{proposition}[\textbf{Hausdorff--Bernstein--Widder representation}]\label{prop:HBD thm}
\noindent A function \(k:[0,\infty)\to\cl\) is completely monotone if and only if it is the Laplace transform of a finite non-negative Borel measure~\(\mu\) on~\([0,\infty)\); that is,
\[
k(r)= \mathcal{L}\mu(r)=\int_{0}^{\infty} e^{-rt}\,d\mu(t).
\]
\end{proposition}

We assume this strictness throughout the remainder of the paper. A proof of this result can be found in~\cite{Fass}, Theorem~2.5.2.

\begin{proposition}[\textbf{Schoenberg characterization and strictness criterion}]\label{prop:Schonbergh}
A function \(k\) is completely monotone on \([0,\infty)\) if and only if 
\(\Phi(x):=k(\|x\|^2)\) is positive semidefinite (cf. Definition \ref{Positive and strictly positive definite kernels}) and radial on \(\mathbb{R}^d\) for all \(d\in\mathbb{N}\).
Moreover, \(\Phi\) is \emph{strictly} positive definite on any set of distinct points if and only if the representing measure~\(\mu\) in~\eqref{prop:HBD thm} satisfies \(\mu((0,\infty))>0\) (equivalently, \(k\) is not constant).
\end{proposition}

A proof of this result is given in~\cite{Fass}, and an expanded version including the strictness criterion is provided later in Proposition~\ref{prop:modified Schonberg}.

\begin{remark}
\textbf{Note.} In this context and related machine-learning or statistics literature, the terms \textit{non-negative definite} and \textit{positive definite} are often used in place of 
\textit{positive semidefinite} and \textit{positive definite}, respectively, while in pure mathematics these distinctions are kept explicit.
\end{remark}

\noindent Although the previous two propositions form the building blocks of our main result, they need to be modified slightly in order to fit into the hypotheses of, and hence prove, Theorem~1. These modified versions are stated below---the author expresses his gratitude and acknowledgement to Prof.~Iosif Pinelis for his contribution to these versions. In what follows,~\(\delta_0\) stands for the Dirac measure concentrated at~0.

\begin{proposition}[\textbf{Modified Proposition \ref{prop:HBD thm}}]\label{prop:modified HBD}
A \textbf{non-constant} function \(k:[0,\infty)\to\mathbb{R}\) is completely monotone if and only if it is the Laplace transform of a finite non-negative Borel measure~\(\mu\) on~\([0,\infty)\) not of the form \(c\delta_0,\ c>0\); that is,
\begin{equation}
k(r)= \mathcal{L}\mu(r)=\int_{0}^{\infty} e^{-rt}\,d\mu(t),
\end{equation}
where \(\mu\) is not of the form \(c\delta_0,\ c>0\).
\end{proposition}

\begin{proof}
Let us actually prove a bit more.\\[3pt]
\noindent A function \(k:[0,\infty)\to\mathbb{R}\) is non-constant and completely monotone if and only if 
\begin{equation}
k(r)=\int_{0}^{\infty} e^{-rt}\,d\mu(t)\quad \forall r\geq 0,
\end{equation}
with \(\mu\ne c\delta_0\) for any real \(c\). \\[4pt]
\noindent \textbf{The ``only if'' part:} Suppose \(k\) is non-constant and completely monotone. Then (6) holds with some (unique) finite measure~\(\mu\). If \(\mu=c\delta_0\) for some real \(c\), then \(k\) is the constant~\(c\), contradicting non-constancy. Hence \(\mu\ne c\delta_0\) for any~\(c\).\\[4pt]
\noindent \textbf{The ``if'' part:} Suppose (6) holds with \(\mu\ne c\delta_0\) for some~\(c\). Then \(k\) is completely monotone and \(\mu((0,\infty))>0\), whence
\begin{equation}
k'(r)=-\int_{0}^{\infty} te^{-rt}\,d\mu(t)<0\quad \forall r\ge0,
\end{equation}
so that \(k\) is not constant. 
\end{proof}

\begin{proposition}{(\textbf{Modified Proposition \ref{prop:Schonbergh}})}\label{prop:modified Schonberg}
A \textbf{non-constant} function \(k\) is completely monotone on \([0,\infty)\) if and only if \(\Phi(x):=k(\|x\|^2)\) is \textbf{strictly} positive definite and radial on \(\mathbb{R}^d\) for all \(d \in \mathbb{N}\).
\end{proposition}

\noindent The proof is based on:

\begin{lemma}\label{lem:char fn strictly positive definite}
The characteristic function of any probability distribution on \(\cl^d\) with uncountable support is strictly positive definite. 
\end{lemma}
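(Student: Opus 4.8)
\nd The plan is to reduce strict positive definiteness to the non-vanishing of exponential polynomials and then to use analyticity. Writing the characteristic function as $\phi(\xi)=\int_{\ddim}e^{i\langle\xi,x\rangle}\,d\mu(x)$ and recalling that for a complex-valued function positive definiteness is tested through the Hermitian form, one computes for distinct nodes $\xi_1,\dots,\xi_n\in\ddim$ and coefficients $c_1,\dots,c_n\in\mathbb{C}$ that
\[
\sum_{j,k=1}^{n}c_j\overline{c_k}\,\phi(\xi_j-\xi_k)=\int_{\ddim}\Big|\sum_{j=1}^{n}c_j e^{i\langle\xi_j,x\rangle}\Big|^2 d\mu(x)\ge 0 .
\]
Hence $\phi$ is always positive definite, and it is \emph{strictly} positive definite exactly when this integral is positive for every nonzero coefficient vector. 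Setting $g(x):=\sum_{j}c_j e^{i\langle\xi_j,x\rangle}$, strict positive definiteness is therefore equivalent to the statement that no nonzero $g$ (with distinct frequencies) vanishes $\mu$-almost everywhere, i.e. that $\mathrm{supp}\,\mu$ is contained in the zero set $Z(g):=\{x:g(x)=0\}$ of no such $g$.

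\nd The analytic ingredient is that distinct frequencies make the exponentials $x\mapsto e^{i\langle\xi_j,x\rangle}$ linearly independent, so $g\not\equiv 0$; moreover $g$ extends to an entire function on $\mathbb{C}^d$ and is thus real-analytic on $\ddim$, so $Z(g)$ is a proper real-analytic subset. First I would dispose of $d=1$, where this already yields the lemma through the uncountability hypothesis in its sharpest form: a nonzero entire function of one variable has isolated zeros, so $Z(g)\cap\mathbb{R}$ is discrete and therefore countable; an uncountable support cannot sit inside a countable set, so $g$ is nonzero on a co-countable, uncountable portion of $\mathrm{supp}\,\mu$ and the integral above is strictly positive. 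In dimension one, uncountable support is precisely the right hypothesis.

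\nd The step I expect to be the main obstacle is propagating this to $d\ge 2$, and the difficulty is genuine rather than technical. The natural attempt is to slice along a line in a generic direction $v$: there $g$ restricts to a one-variable exponential polynomial with frequencies $\langle\xi_j,v\rangle$, which are distinct for generic $v$, hence with only discrete zeros, and one would like to feed this back into the counting argument. The obstruction is that in $d\ge2$ the zero set $Z(g)$ is itself a real-analytic hypersurface and hence \emph{uncountable}, and the mass of $\mu$ may concentrate exactly on it: if $\mathrm{supp}\,\mu\subseteq\{x_d=0\}$ and $g(x)=1-e^{2\pi i x_d}$ (distinct frequencies $0$ and $2\pi e_d$), then $g\equiv 0$ on $\mathrm{supp}\,\mu$ even though that support is uncountable. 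Thus uncountability alone cannot carry the conclusion past $d=1$; what the Hermitian form truly requires is the direction-sensitive condition that $\mathrm{supp}\,\mu$ lie in $Z(g)$ for no nonzero exponential polynomial $g$, and uncountability supplies this automatically only when $d=1$.

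\nd Fortunately, the only use the paper makes of this lemma does satisfy that stronger requirement for free, so the downstream argument is unaffected. In the proof of the modified Schoenberg theorem the characteristic functions that occur are those of the nondegenerate Gaussians $x\mapsto e^{-t\norm{x}^2}$, $t>0$, produced by the Bernstein representation $k(r)=\intnive e^{-rt}\,d\mu(t)$; each such Gaussian has full support $\ddim$, which is contained in no proper real-analytic hypersurface, so $g$ cannot vanish $\mu$-a.e. for any nonzero $g$. I would therefore present the reduction and the analyticity step uniformly in $d$, invoke uncountability to finish $d=1$, and in $d\ge2$ close the argument exactly where it is needed by recording that the pertinent supports are all of $\ddim$.
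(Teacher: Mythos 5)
Your opening reduction --- strict positive definiteness of the characteristic function $\phi$ is equivalent to the statement that no nonzero exponential polynomial $g(x)=\sum_{j=1}^{n}c_j e^{i\langle \xi_j,x\rangle}$ with distinct frequencies vanishes identically on $\mathrm{supp}\,\mu$ --- is exactly the paper's opening move, and your $d=1$ argument (a nonzero entire function of one variable has isolated, hence countably many, zeros, which an uncountable support cannot fit inside) coincides with the paper's. The divergence comes at the next step, and you are right to refuse it: the paper asserts, uniformly in $d$, that $h(z)=\sum_j e^{i x_j\cdot z}a_j$ ``is entire and hence can have at most countably infinite number of zeros,'' which is true only in one complex variable. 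For $d\ge 2$ the zero set of a nonzero entire function on $\mathbb{C}^d$ is an analytic hypersurface and is uncountable, so the paper's proof of Lemma 6 is valid only for $d=1$. Your counterexample shows the defect is in the statement, not merely the proof: take $\mu$ with uncountable support inside the hyperplane $\{x_d=0\}$ and $g(x)=1-e^{2\pi i x_d}$ with distinct frequencies $0$ and $2\pi e_d$; then $\phi(0)-\phi(2\pi e_d)-\phi(-2\pi e_d)+\phi(0)=1-1-1+1=0$ with a nonzero coefficient vector, so $\phi$ is not strictly positive definite even though the support is uncountable. Uncountability is the correct hypothesis only when $d=1$.

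Your repair is also the right one for the paper's purposes. The only instances of Lemma 6 invoked in the proof of the modified Schoenberg theorem are the characteristic functions of the nondegenerate Gaussians $x\mapsto e^{-t\norm{x}^2}$, $t>0$, whose distributions have support equal to all of $\mathbb{R}^d$; since a nonzero entire (hence real-analytic) $g$ cannot vanish on a set with nonempty interior, $g\equiv 0$ on the support forces $g\equiv 0$, and linear independence of distinct characters then forces all $c_j=0$, giving strict positive definiteness. So the lemma should be restated with a hypothesis such as ``the support is not contained in the zero set of any nonzero exponential polynomial'' (satisfied in particular by full support, or support of positive Lebesgue measure), with the uncountability formulation reserved for $d=1$; under that correction every downstream result in the paper, including Theorem 6 and hence Theorem 1, is unaffected. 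In short: your proposal is not an alternative route so much as a diagnosis --- it exposes a genuine error in the paper's Lemma 6 for $d\ge 2$ and supplies the precise weakening under which the paper's argument goes through.
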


\noindent Assume the lemma for the moment, which will be proved at the end of the theorem. Using Lemma \ref{lem:char fn strictly positive definite}, let us now prove: \\[2pt]
\noindent \textbf{The "only if" part of Proposition \ref{prop:modified Schonberg}:} Suppose \(k\) is non-constant and completely monotone. Then, using the notation \(\Phi\) as in Proposition  above, we have
\[
\Phi(x)=\int_0^\infty e^{-t\|x\|^2}\,d\mu(t)\quad\forall x\in\cl^d.
\]
For each real \(t>0\), the function \(x\mapsto e^{-t\|x\|^2}\) is the characteristic function of a Gaussian distribution on~\(\mathbb {R}^d\), namely the Gaussian distribution with mean~0 and covariance~\(\Sigma=t\,I_d\). Hence, by Lemma \ref{lem:char fn strictly positive definite}, the function \(x\mapsto e^{-t\|x\|^2}\) is strictly positive definite. Also, by the proof of Proposition \ref{prop:modified HBD}, \(\mu\) is not of the form \(c\delta_0,\ c > 0\), so the measure \(\mu\) is not concentrated at zero. Thus \(\mu((0,\infty))>0\). Consequently, by the expression of \(\Phi\) above and integrating with respect to~\(\mu\), we conclude that \(\Phi\) is strictly positive definite. \\[4pt]
\noindent \textbf{The "if" part of Proposition \ref{prop:modified Schonberg}:} Suppose \(\Phi\) is strictly positive definite. Then, by Proposition \ref{prop:Schonbergh} above, \(k\) is completely monotone. So, (6) holds and hence 
\[
\Phi(x)=\int_0^\infty e^{-t\|x\|^2}\,d\mu(t)\quad\forall x\in\mathbb {R}^d.
\]
Since \(\Phi\) is strictly positive definite, we must have \(\mu((0,\infty))>0\); for otherwise, if \(\mu((0,\infty))=0\), then
\(\Phi(x)=\int_{\{0\}} e^{-t \|x\|^2}\,d\mu(t)=\mu(\{0\})\),
which cannot be strictly positive definite. As \(\mu((0,\infty))>0\), by equation~(7) for \(k'\) we have \(k'<0\), hence \(k\) is non-constant. \\[4pt]
\noindent \textbf{Proof of Lemma \ref{lem:char fn strictly positive definite}.} Let \(f\) be the characteristic function of a random vector \(Z\) in \(\mathbb {R}^d\) with uncountable support. To obtain a contradiction, suppose that \(f\) is not strictly positive definite. Then, for some \(k\in\mathbb{N}\), some complex \(a_1,\dots,a_k\) not all zero, and some pairwise distinct \(x_1,\dots,x_k\in\mathbb{R}^d\),
\[
E\Big|\sum_{j=1}^k e^{i x_j\cdot Z}a_j\Big|^2
=\sum_{j,l=1}^k f(x_j-x_l)a_j\bar a_l=0,
\]
where \(\cdot\) denotes the dot product. Hence \(h(z):=\sum_{j=1}^k e^{i x_j\cdot z}a_j=0\) for all \(z\) in the uncountable support of the distribution of~\(Z\). Since \(h\) is entire, it can have at most countably many zeros; therefore \(h(z)=0\) for all \(z\in\mathbb{R}^d\), implying \(a_1=\dots=a_k=0\), a contradiction. \\

\noindent Following Propositions \ref{prop:modified HBD}, \ref{prop:modified Schonberg} above, we can state for our purposes:

\begin{proposition}[\textbf{Kernel matrix of a radially symmetric, strictly positive definite kernel}]\label{prop:kernel matrix of a rad symm, SPSD kernel}
Let the radially symmetric, strictly positive definite kernel \(K\) be given by a profile \(k\). Then
\(k(r)= \int_{0}^{\infty} e^{-rt}\,d\mu(t)\) for all \(r\geq 0\), and thus for all \(x_i,x_j \in \ddim\), the corresponding strictly positive definite kernel matrix is
\[
K = \int_{0}^{\infty}\!\!\begin{bmatrix}
1 & e^{-t\|x_1-x_2\|^2} & \dots & e^{-t\|x_1-x_n\|^2}\\
e^{-t\|x_2-x_1\|^2} & 1 & \dots & e^{-t\|x_2-x_n\|^2}\\
\vdots & \vdots & \ddots & \vdots\\
e^{-t\|x_n-x_1\|^2} & \dots & \dots & 1
\end{bmatrix} d\mu(t),
\]
for some finite positive Borel measure \(\mu\) not of the form \(c\delta_0,\ c>0.\)
\end{proposition}
%%%%%%%%%%%%%%%%%%%%%%%%%%%%%%%%
\begin{remark}[\textbf{Examples of completely monotone profiles used in Section~\ref{scn:experiments}}]\label{rmk:examples-CMF-experiments}
The kernel profiles
\[
k_{\mathrm{Lap}}(q):=e^{-\lambda\sqrt q}\qquad(\lambda>0),
\qquad\text{and}\qquad
k_{\alpha}(q):=e^{-\lambda q^\alpha}\qquad(\lambda>0,\ \alpha\in(0,1])
\]
are completely monotone on $(0,\infty)$.
Indeed, $q\mapsto q^\alpha$ is a Bernstein function (i.e. a non-negative function with a completely monotone first
derivative) for $\alpha\in(0,1]$, and if $\phi$ is Bernstein then
$q\mapsto e^{-\lambda \phi(q)}$ is completely monotone for every $\lambda>0$ by Lemma~3.4(i) in \cite{Merkle2014CompletelyMonotone}; hence $k_\alpha$ is completely monotone.
The Laplace profile is the special case $\alpha=\tfrac12$.
Consequently, by Schoenberg's characterization (cf. Proposition \ref{prop:modified Schonberg}), each $k_\alpha$ (and $k_{\mathrm{Lap}}$) generates a radially symmetric positive definite kernel on $\mathbb{R}^d$ for every $d$ via $K(x,y)=k(\|x-y\|^2)$.
\end{remark}

%section 7
\section{Details of the proof of the Lemma \ref{lem:convergence w positive definite kernel} and hence of the Main Theorem \ref{mainTheo}}\label{scn:details of the proof}

\noindent Assume, as in the hypothesis of Theorem 1, we are working with a radially symmetric, strictly positive definite kernel with non-constant, completely monotone profile (Theorem 6) $k,$ arising as the Laplace transform of a finite, positive Borel measure $\mu \ne c\delta_0, c >0.$ So let us start by computing the gradient and Hessian of the KDE; in what follows, we will abbreviate the notation $f_{h,k}$ by $f$ where there are no risk of confusion. Recall that $k'<0, k'' > 0$ as they will be used in the subsequent calculations. Below we denote by $I_{d \times d}$ the identity matrix of dimensions $d\times d.$

\nd Recall the definition of KDE in the beginning of section \ref{scn:Mathematical description of the MS Algorithm}, the normalization constant $c_N$ depending on the kernel $k,$ the number $n$ of data points, the dimension $d,$ and the bandwidth $h.$ 

\begin{equation}\label{eqn:KDE}
  \hat{f}(x)\equiv \hat{f}_{h,k}(x) = c_N \sum_{i=1}^{n}  k \left (  \norm{ \frac{x-x_i}{h}}  ^2  \right )   
\end{equation}

\nd \textbf{Note:} When the kernel is Gaussian, i.e. the profile $k(r):= exp(-r/2), c_N= \frac{1}{(\sqrt{2\pi})^{d}nh^d}.$

\noindent Taking gradient w.r.t. $x$ of equation (\ref{eqn:KDE}) above,

$$\nabla \hat{f}(x)=2c_N \sum_{i=1}^{n}  \left (\frac{x- x_i}{h^2} \right )\kdxxih  = 2c_N \sum_{i=1}^{n} \left (\frac{x- x_i}{h^2} \right ) \intnive t \hspace{1mm} \gauss d\mu(t)$$

\noindent Taking gradient once more, we get:\\
\begin{equation*}
\begin{split}
      H(x) & = Hess  \hspace{1mm} \hat{f}(x) \\
 & = \frac{2c_N}{h^2} \sum_{i=1}^{n} \left[ I_{d \times d} \left(  \kdxxih + 2\frac{\covxxi}{h^2} \kddxxih   \right)\right] \\
 & = \frac{2c_N}{h^2}\sum_{i=1}^{n} \intnive \left (t I_{d \times d} + \frac{2t^2}{h^2} (x - x_i)(x - x_i)^{T}  \right ) \hspace{1mm} \gauss d\mu(t). 
\end{split}
\end{equation*}
 
\noindent Define, following \cite{Gh1}'s notations (P.5, section 4.1), the following two functions, respectively:

$C: \ddim \to \cl $ by:

$$C(x):= -2 \sumn\kdxxih  = \intnive 2t    \left[ \sum_{i=1}^{n} \gauss  \right ] d\mu(t)$$

\nd \textbf{Quick check:} When $k(r):= exp(-r/2), k'(r)=  -\frac{1}{2}exp(-r/2),$ so in this case the expression of $C(x)$ for general kernel matches up with the one for the Gaussian kernel, given in section 4.\\

\nd Next define $A: \ddim \to \cl^{d \times d}$ by:

$$A(x):= 4 \sumn \covxxi \kddxxih =4 \intnive t^2 \left [(x-x_i)(x-x_i)^{T} \gauss  \right ] d\mu(t)$$

\nd \textbf{Quick check:} When $k(r):= exp(-r/2), k''(r)=  \frac{1}{4}exp(-r/2),$ so in this case the expression of $A(x)$ for general kernel matches up with the one for the Gaussian kernel, given in section 4.\\

\noindent With the above functions and their notations, let's take another look at the Hessian $H(x):$

\begin{equation}
  \begin{split}
        H(x) 
  & = \frac{2c_N}{h^2} \sum_{i=1}^{n} \left[I_{d \times d} \left(  \kdxxih + 2\frac{\covxxi}{h^2} \kddxxih   \right)\right] \\
  & = \frac{c_N}{h^2} { \sum_{i=1}^{n} 2 \kdxxih } (I_{d \times d}) + \frac{4c_N}{h^2}{ \sum_{i=1}^{n}\covxxi \kddxxih   }\\
  & = -\frac{c_N}{h^2} \underbrace{ \sum_{i=1}^{n} -2 \kdxxih }_\text{function C(x) } (I_{d \times d}) + \frac{c_N}{h^4}\underbrace{ 4\sum_{i=1}^{n}\covxxi \kddxxih   }_\text{ map A(x)}
  \end{split}
\end{equation}

\nd So with the above decomposition of the Hessian $H(x),$ we can write:

$$  \boxed{  H(x) = - c_N \frac{C(x)}{h^2} I_{d \times d}  + c_N \frac{A(x)}{h^4}  }$$

\noindent Now, we need to show that, for sufficiently large bandwidth $h$, $H(x)$ is of full rank at every $x$. So if possible, assume that, $H$ is not of full rank $\equiv$ $H$ has a zero eigenvalue $v \in \cl^{D \times 1}$ so that $H(x)v =0 \in \cl^{D \times 1}$. Using the above expression for $H$ above,\\

%TEST for using split in an equation to center a group of equations
\begin{equation}
    \begin{split}
        \left [- c_N \frac{C(x)}{h^2} I_{d \times d}  + c_N \frac{A(x)}{h^4} \right] v & = 0 \\
      \implies A(x) v & = h^2 C(x) v  \\
   \implies \boxed{4 \sumn \covxxi \kddxxih v} &=  \boxed{h^2 (-2 \sumn\kdxxih)v}\\
    &= h^2 \intnive 2t    \left[ \sum_{i=1}^{n} \gauss  \right ] d\mu(t) v
  \end{split}
\end{equation}

\nd \textbf{We state that the above equation is fundamental to proving Lemma \ref{lem:convergence w positive definite kernel}, thus Theorem \ref{mainTheo}, as we will see in the next section.}

\noindent Let us abbreviate notation and write the above as:\\
\begin{equation}
 4\sum_{i=1}^{n}(x-x_i)(x-x_i)^{T} J_i(h) v = - 2 h^2 \sum_{i=1}^{n} I_i(h)v
\end{equation}

\noindent Where:

$$I_i(h) := \intnive t   \gauss d\mu(t) = -\kdxxih$$ and 
$$ J_i(h):= \intnive t^2   \gauss d\mu(t) = \kddxxih$$

\nd By taking (Euclidean) norm on both sides:

\begin{equation}
   4\norm{ \sumn \covxxi  J_i(h)v} = 2h^2 \norm{ \sumn I_i(h)v} , v\in \ddim
\end{equation}

\noindent Note that the kernel profile $k$ is smooth (infinitely differentiable) thanks to Proposition \ref{prop:modified HBD}. Then as $h \to \infty$, each $I_i(h) \to \intnive t d\mu(t) = - k'(0)$ and each $J_i(h)\to \intnive t^2 d\mu(t) =  k''(0).$ So the above equation is a contradiction as $h \to \infty$. So there exists a sufficiently large $h_0 > 0$ so that $\forall h > h_0$, the equation (9) won't hold, meaning that Hessian $H(x)$ has no zero eigenvalue.  So it has full rank. This shows that for sufficiently large value of $h,$ Hess $H(x)$ has full rank, but it doesn't give us a precise lower bound for $h$ for $H(x)$ to be of full rank. This is something we will work right next.\\

\nd \textbf{Finding a more precise bound $h_0$:} Let us recall that $H(x)= 0 \implies \frac{A(x)}{h^2}v = C(x)v$ for some $v \in \ddim.$ So if we can find $h>0$ so that the maximum eigenvalue of $\frac{A(x)}{h^2}$ is less than $C(x),$ we are done. This is precisely what we do below. Take any $x=x^*$ that's a stationary point of $\hat{f}_{h,k}$, and since $x^*$ lies in the convex hull of the sample $\{x_1\dots x_n\}, \norm{x^*}\le \norm{x_{max}}$, so:  
\begin{equation}
    \begin{split}
       & \lambda_{max}\left(\frac{A(x^*)}{h^2}\right) \\
       & = \frac{\lambda_{max}(A(x^*))}{h^2}\\
       & = \frac{1}{h^2}\lambda_{max}\left(  4 \sumn \covxstxsti\kddxxihstar \right)\\
       & = \frac{4}{h^2}\lambda_{max}\left( \sumn \covxstxsti \kddxxihstar \right)\\
       & \le \frac{4}{h^2} \sumn \lambda_{max} \left(\covxstxsti \kddxxihstar \right)\\
       & = \frac{4}{h^2} \sumn \lambda_{max} (\covxstxsti) \kddxxih \\
       & = \frac{4}{h^2} \sumn \norm{\xst-x_i}^2 \kddxxihstar \\
       & \le \frac{4}{h^2} \sumn (2\norm{x_{max}})^2 \kddxxihstar (\text{Using }  \norm{x^*}\le \norm{x_{max}})\\
       & = \frac{16\norm{x_{max}}^2}{h^2} \sumn  \kddxxihstar
    \end{split}
\end{equation}

\nd So if we want to make sure that $\lambda_{max}( \frac{A(\xst)}{h^2}) < C(\xst),$ we can have that if we assume $h$ is so that:\\

\begin{equation}
    \begin{split}
      & \lambda_{max}( \frac{A(\xst)}{h^2}) < C(\xst)\\
      & \impliedby   \frac{16\norm{x_{max}}^2}{h^2} \sumn  \kddxxihstar < -2 \sumn\kdxxihstar \\
      & \iff \frac{h^2}{\norm{x_{max}}^2} > \frac{16 \sumn  \kddxxihstar }{ -2 \sumn\kdxxihstar} = -8  \frac{\sumn  \kddxxih }{ \sumn\kdxxihstar}
      %& \implies \frac{16}{h^2} \sumn k''\left(\frac{4\norm{x_{max}}^2}{h^2}\right)
    \end{split}
\end{equation}

\nd Next note that since $k''$ is a decreasing function and $k'$ is an increasing function (since $k'''(r)\le 0$ and $k''(r)\ge 0$ for completely monotone $k$), this means:
\[
k''\!\left(\frac{4\norm{x_{max}}^2}{h^2}\right) \le \kddxxihstar,
\qquad
k'\!\left(\frac{4\norm{x_{max}}^2}{h^2}\right) \ge \kdxxihstar.
\]

\nd Therefore,
\[
n k''\!\left(\frac{4\norm{x_{max}}^2}{h^2}\right)
= \sumn k''\!\left(\frac{4\norm{x_{max}}^2}{h^2}\right)
\le \sumn \kddxxihstar,
\]

\nd \text{and}
\[
n k'\!\left(\frac{4\norm{x_{max}}^2}{h^2}\right)
= \sumn k'\!\left(\frac{4\norm{x_{max}}^2}{h^2}\right)
\ge \sumn \kdxxihstar.
\]

\nd Therefore,
\[
\frac{ k''\!\left(\frac{4\norm{x_{max}}^2}{h^2}\right)}
     { k'\!\left(\frac{4\norm{x_{max}}^2}{h^2}\right)}
\le
\frac{\sumn \left(\kddxxihstar\right)}
     {\sumn \left(\kdxxihstar\right)}.
\]
Combining this with equation (12), we see that a sufficient condition is

$$\frac{h^2}{\norm{x_{max}}^2} > -8 \frac{ k''\left(\frac{4\norm{x_{max}}^2}{h^2}\right)}{ k'\left(\frac{4\norm{x_{max}}^2}{h^2}\right)}$$

\nd For simplicity, if we define $q := \frac{4\norm{x_{max}}^2}{h^2},$ then the above equation becomes: 

$$\boxed{-2q \frac{k''(q)}{k'(q)}< 1, \text{where } q := \frac{4\norm{x_{max}}^2}{h^2}}$$

\nd \textbf{Check:} As $h$ increases, $q:= \frac{4\norm{x_{max}}^2}{h^2}$ decreases, so $k''(q) > 0$ increases since $k'''(q)< 0$. Moreover, as $q$ decreases, $k'(q) < 0$ increases since $k''(q) > 0,$ so $-k'(q) > 0$ decreases. Therefore $\frac{-2}{k'(q)}$ decreases. Putting everything together: $-2q \frac{k''(q)}{k'(q)}$ decreases. Finally using  $k\in \mathcal{C}^2([0,\infty)),$ we see that as $h \to \infty, q\to 0, k'(q)\to k'(0), k''(q)\to k''(0) \implies -2q \frac{k''(q)}{k'(q)}\to 0$ as well. So this means if we take $h_0$ any number so that for the corresponding $q_0:= \frac{4\norm{x_{max}}^2}{{h_0}^2}, -2q_0\frac{k''(q_0)}{k'(q_0)} < 1,$ then for every $h > h_0,$ the same condition will continue to hold. This gives us our desired $h_0.$ Note that in the case of the Gaussian kernel $k(q):= exp(-q/2)$, the condition reduces to $ q < 1 \iff h > 2\norm{x_{max}}$. This proves Lemma 5 and so the main Theorem 1.

\begin{remark}
 \nd As mentioned right before the conclusion section (P.8) of \cite{Gh1}, having a large bandwidth $h$ amounts to smoothing the pdf and low estimation variance at the cost of a large bias for mode estimation of the pdf. So, the above sufficient condition for convergence of MS with more general kernels, although informative, may not be practically desirable. However, our experiments in the next section show that even for bandwidths large enough, we can get the right number of clusters in some cases and if we use the right kernels, then number of basins of attractions can still be meaningful.
\end{remark}

\section{Experiments}\label{scn:experiments}

Experiments in this section validate (i) the theorem-prescribed $h_0$
logic where applicable, and (ii) the practical clustering behavior of kernels at large bandwidth; Laplace and Cauchy-type kernels are treated as out-of-the-theorem benchmarks because for these two kernels, the equation \eqref{eqn:equation-main-Theorem} defining $h_0$ has no finite solution, see subsection \ref{scn:non-existence-h0-Laplace-kernel} below. We also show by theoretical heuristics that Gaussian kernels are \textit{not }the ideal kernels to use for large bandwidths although they guarantee convergence of the MS algorithm \cite{Gh1}; see Lemma \ref{lem:largeBW-globalmean}. On the other hand, there are classes of non-Gaussian, radially symmetric, strictly positive definite kernels that can ensure good clustering performance even with large bandwidths, see Lemma \ref{lem:largeBW-powerlaw-local}, Cauchy-type kernel being of these class of kernels.

\subsection{Solving for $h_0$} In order to experimentally realize some instances of the main theorem \ref{mainTheo}, we solve the equation \ref{eqn:equation-main-Theorem} given by $-2 q_0 \frac{k^{\prime \prime}\left(q_0\right)}{k^{\prime}\left(q_0\right)}=1, \text { where } q_0:=\frac{4\left\|x_{\max }\right\|^2}{h_0^2}$ for $h_0$, for different positive definite kernels. We then use the solved $h_0$ to perform iterations of mean shift as given by $y_{j+1}=m_{h, g}\left(y_j\right)+y_j=\frac{\sum_{i=1}^n x_i g\left(\left\|\frac{y_j-x_i}{h}\right\|^2\right)}{\sum_{i=1}^n g\left(\left\|\frac{y_j-x_i}{h}\right\|^2\right)}$ over several standard datasets such as\textit{ Iris } and on synthetic datasets such as samples generated from \textit{isotropic Gaussians with separate means} to represent different clusters. Below, we note that while Equation \eqref{eqn:equation-main-Theorem} can be used to solve for a finite $h_0$ for Gaussian kernels, it is not the case for some other radially symmetric, strictly positive definite kernels like Laplace or Cauchy kernels, in which case we will use large finite bandwidths in our experiments.
\subsubsection{Gaussian Kernel}  The Gaussian kernel profile is given by $k(r):=e^{-r / 2}$, where the main theorem \ref{mainTheo} yields $h_0=2\norm{x_{max}},$ which will be used in the experiments.

\subsubsection{Laplace Kernel non-existence of a finite $h_0$.}\label{scn:non-existence-h0-Laplace-kernel}

Consider the Laplace kernel
\[
k(q)=e^{-\lambda \sqrt{q}}, \qquad \lambda>0 .
\]
A direct computation gives
\[
k'(q)=-\frac{\lambda}{2}q^{-1/2}e^{-\lambda\sqrt{q}}, 
\qquad
k''(q)=\Big(\frac{\lambda}{4}q^{-3/2}+\frac{\lambda^2}{4}q^{-1}\Big)e^{-\lambda\sqrt{q}} .
\]
Therefore,
\[
\frac{k''(q)}{k'(q)}
=-\frac12\,q^{-1}\bigl(1+\lambda\sqrt{q}\bigr),
\qquad\text{and hence}\qquad
-2q\,\frac{k''(q)}{k'(q)}=1+\lambda\sqrt{q}.
\]
The bandwidth equation \ref{eqn:equation-main-Theorem} in Theorem \ref{mainTheo}
\[
-2q_0\,\frac{k''(q_0)}{k'(q_0)}=1
\]
thus implies $\lambda\sqrt{q_0}=0$, and consequently $q_0=0$.
Recalling that $q_0=4\|x_{\max}\|^2/h_0^2$, this yields $h_0=\infty$. 

\subsubsection{Non-existence of a finite $h_0$ for Cauchy-type kernels.}
For the power-law Cauchy-type profile 
\[
k(q)=\frac{1}{1+q^\alpha}, \qquad 0<\alpha<1,
\]
a direct computation gives
\[
-\frac{2q k''(q)}{k'(q)}
=
2\alpha\,\frac{1-\alpha + q^\alpha}{1+q^\alpha}.
\]
Since $0<\alpha<1$, the right-hand side satisfies
\[
0 < -\frac{2q k''(q)}{k'(q)} < 2\alpha < 1
\quad \text{for all } q>0,
\]
and therefore the equation 
\(
-2q\,k''(q)/k'(q)=1
\)
admits no finite solution. Hence $h_0=+\infty$ in this case.

\medskip

\subsection{Failure case of Gaussian kernel, backed up by theory}

We note that although the MS mode estimate sequence converges in the case of the Gaussian kernel, it does not necessarily yield the right number of clusters at the values of
$h_0=2\norm{x_{max}}$ prescribed by the main theorem \ref{mainTheo} as shown in Figure \ref{fig:oneGauss} below. 

\begin{figure}[!htbp]
    \centering   \subfloat[\centering above optimal $h_0$]{{\includegraphics[width=5cm]{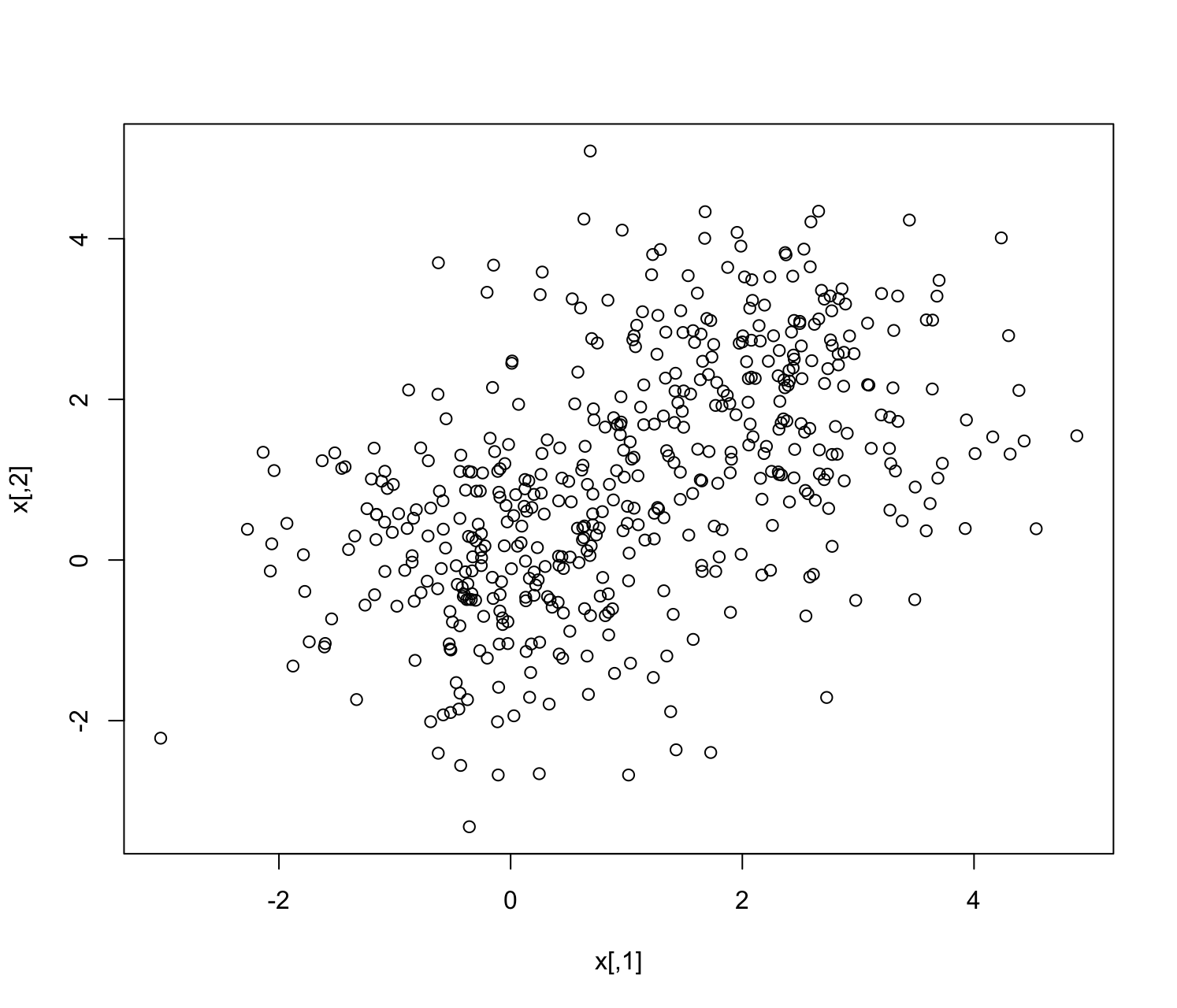} }}%
    \subfloat[\centering below optimal $h_0$]{{\includegraphics[width=5cm]{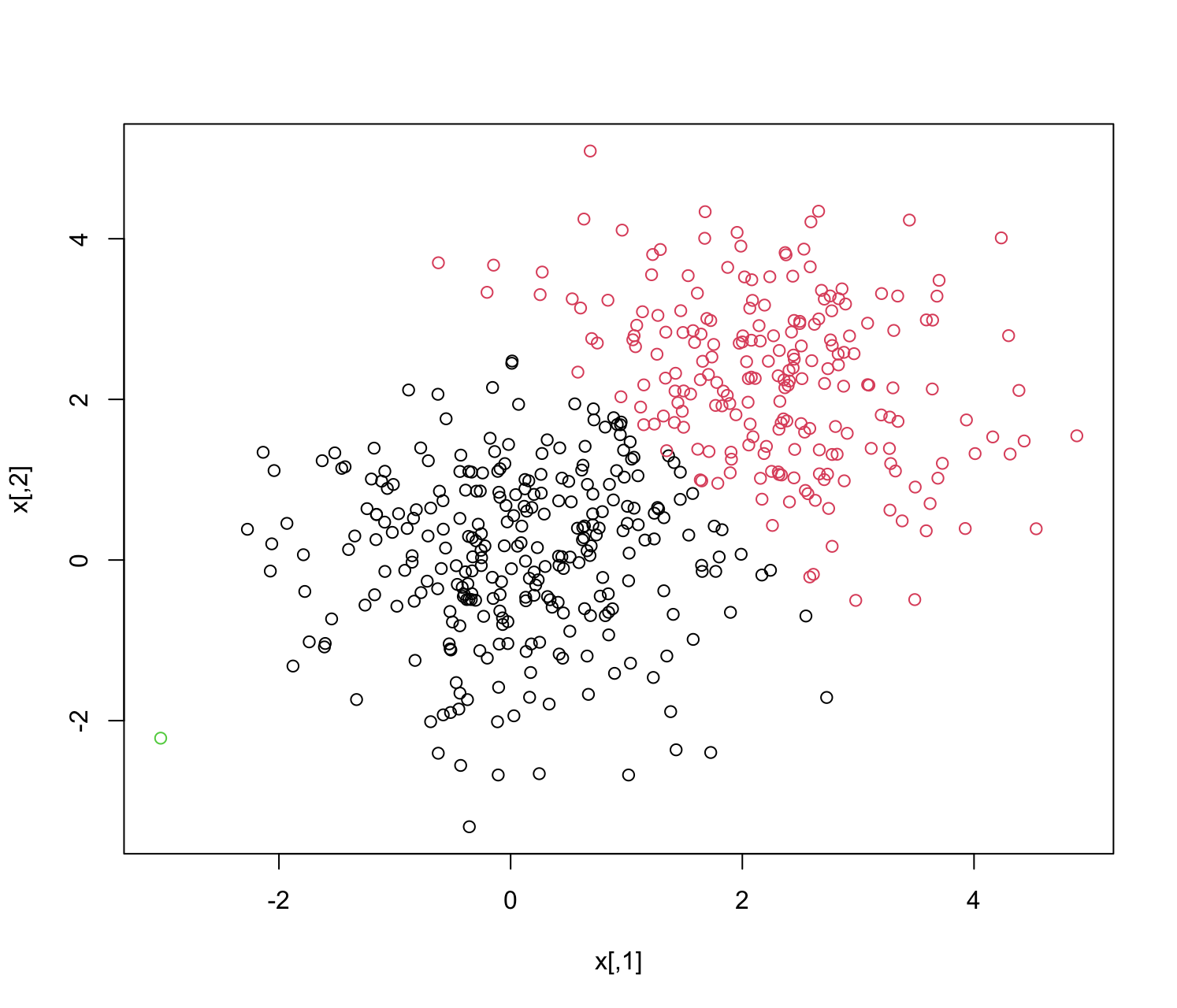} }}%
    \caption{Mean-shift with the Gaussian kernel converges, but identifies an incorrect number of clusters at the theoretically prescribed bandwidth.}%
    \label{fig:oneGauss}%
\end{figure}

 This dataset have two clusters, although the above result fails to identify the correct number of clusters. We illustrate this empirically in Figure \ref{fig:oneGauss} upon applying
mean-shift to a dataset with one half of the samples generated from an isotropic Gaussian and the
other half of samples generated from another isotropic Gaussian with a different mean.\\

\nd The failure of the Gaussian kernel is explained by the more general Lemma \ref{lem:largeBW-globalmean}, that heuristically shows whenever the derivative $k'(0)$ is finite, $k$ being the kernel profile, we cannot expect the right number of clusters for large bandwidth, as the mode estimate sequence is supposed to converge to the global sample mean: clearly this is the case for Gaussian kernels where $k(r):=e^{-r/2}.$

\begin{lemma}[\textbf{Large-bandwidth collapse of the first iterate to the global mean under finite $k'(0)$}]\label{lem:largeBW-globalmean}
Let $x_1,\dots,x_n\in\mathbb{R}^d$ and fix $i\in\{1,\dots,n\}$. For $h>0$, define
\[
g:=-k', \qquad
W_{ij}(h):=g\!\left(\frac{\|x_i-x_j\|^2}{h^2}\right),\qquad
x_i^{(1)}(h):=\frac{\sum_{j=1}^n W_{ij}(h)\,x_j}{\sum_{j=1}^n W_{ij}(h)}.
\]
Assume that $g:[0,\infty)\to(0,\infty)$ is continuous at $0$ and satisfies $0<g(0)<\infty$. Then
\[
\lim_{h\to\infty} x_i^{(1)}(h)=\frac1n\sum_{j=1}^n x_j.
\]
\end{lemma}

\begin{proof}
Set $r_{ij}(h):=\|x_i-x_j\|^2/h^2$. Then $r_{ij}(h)\to 0$ as $h\to\infty$, hence by continuity of $g$ at $0$,
$W_{ij}(h)=g(r_{ij}(h))\to g(0)$ for each $j$.
Let $S(h):=\sum_{j=1}^n W_{ij}(h)$. Then $S(h)\to ng(0)>0$, so $S(h)>0$ for all large $h$, and thus
$W_{ij}(h)/S(h)\to g(0)/(ng(0))=1/n$ for each $j$.
Since
$x_i^{(1)}(h)=\sum_{j=1}^n \bigl(W_{ij}(h)/S(h)\bigr)\,x_j$
is a finite sum, we conclude
$x_i^{(1)}(h)\to \sum_{j=1}^n (1/n)x_j=\frac1n\sum_{j=1}^n x_j$.
\end{proof}

\subsection{Correcting the failure of the Gaussian kernel with non-Gaussian, radially symmetric, strictly positive definite kernels}

\begin{lemma}[\textbf{Large-bandwidth first iterate under a power-law singularity of $g=-k'$ at the origin}]\label{lem:largeBW-powerlaw-local}
Let $k:[0,\infty)\to\mathbb{R}$ be $C^1$ on $(0,\infty)$, assume $k$ is nonincreasing on $(0,\infty)$, and set
$g(r):=-k'(r)\ge 0$ for $r>0$.
Assume there exist constants $C>0$ and $\beta>0$ such that
\[
g(r)=C\,r^{-\beta}\bigl(1+o(1)\bigr)\qquad (r\downarrow 0).
\]
Let $x_1,\dots,x_n\in\mathbb{R}^d$ and fix $i\in\{1,\dots,n\}$.
Assume $x_j\neq x_i$ for all $j\neq i$.
For $h>0$, define for $j\neq i$
\[
W_{ij}(h):=g\!\left(\frac{\|x_i-x_j\|^2}{h^2}\right),\qquad
\text{the first iterate }\widetilde x_i^{(1)}(h):=\frac{\sum_{j\neq i} W_{ij}(h)\,x_j}{\sum_{j\neq i} W_{ij}(h)}.
\]
Set $p:=2\beta>0$. Then
\[
\lim_{h\to\infty}\widetilde x_i^{(1)}(h)
=
\frac{\sum_{j\neq i}\|x_i-x_j\|^{-p}\,x_j}{\sum_{j\neq i}\|x_i-x_j\|^{-p}}.
\]
Moreover, the limiting weights $\|x_i-x_j\|^{-p}$ are strictly decreasing in $\|x_i-x_j\|$.
\end{lemma}

\begin{proof}
Fix $j\neq i$ and set $d_{ij}:=\|x_i-x_j\|>0$ and $r_{ij}(h):=d_{ij}^2/h^2$.
Then $r_{ij}(h)\downarrow 0$ as $h\to\infty$ and hence
\[
W_{ij}(h)=g(r_{ij}(h))=C\,r_{ij}(h)^{-\beta}\bigl(1+o(1)\bigr)
= C\,h^{2\beta}d_{ij}^{-2\beta}\bigl(1+o(1)\bigr).
\]
Since the index set $\{j\neq i\}$ is finite and $r_{ij}(h)\to 0$ for each such $j$, the $o(1)$ term is
uniform in $j\neq i$. Thus
\[
\sum_{j\neq i} W_{ij}(h)\,x_j
= C\,h^{2\beta}\sum_{j\neq i} d_{ij}^{-2\beta}\bigl(1+o(1)\bigr)x_j,\qquad
\sum_{j\neq i} W_{ij}(h)
= C\,h^{2\beta}\sum_{j\neq i} d_{ij}^{-2\beta}\bigl(1+o(1)\bigr).
\]
Because $g\ge 0$ and $j\neq i$ ensures $W_{ij}(h)>0$, the denominator is strictly positive for all $h$.
Cancel $C\,h^{2\beta}$ and let $h\to\infty$ to obtain
\[
\widetilde x_i^{(1)}(h)\to
\frac{\sum_{j\neq i} d_{ij}^{-2\beta}\,x_j}{\sum_{j\neq i} d_{ij}^{-2\beta}}
=
\frac{\sum_{j\neq i}\|x_i-x_j\|^{-p}\,x_j}{\sum_{j\neq i}\|x_i-x_j\|^{-p}},
\qquad p=2\beta.
\]
Finally, $d\mapsto d^{-p}$ is strictly decreasing on $(0,\infty)$ for $p>0$. \qedhere
\end{proof}

\nd The above Lemma \ref{lem:largeBW-powerlaw-local} implies that for large bandwidths $h$, the weights we put on $x_j$ to calculate the first iterate $\widetilde x_i^{(1)}(h)$ is inversely proportional to $\|x_i-x_j\|^{p},$ which seems to imply that unlike Gaussian kernel, there is more chance to obtain more \textit{localized basins} of attraction if we use the above types of non-Gaussian kernels, something we will also obsrve in the experiments. However, the following lemma tells us that within the class of radially symmetric, strictly positive definite kernels whose profiles are given by completely monotone functions (cf. Definition \ref{Completely monotone functions}), there is a limit to $p$ above, namely $p\le 2.$ This means we can make the basins of attraction localized up to a certain extent only.

\begin{lemma}[\textbf{Radially symmetric, strictly positive definite kernels cannot yield distance exponents $p>2$ in Lemma \ref{lem:largeBW-powerlaw-local}}]\label{lem:CM-bound-p}
Assume in addition that $k$ is \emph{completely monotone} on $(0,\infty)$ (cf. Definition \ref{Completely monotone functions}).
Then
\[
g(r)\le \frac{k(0)}{e}\,\frac{1}{r}\qquad\text{for all }r>0.
\]
In particular, if $g$ satisfies the power-law singularity assumption
$g(r)=C\,r^{-\beta}\bigl(1+o(1)\bigr)$ as $r\downarrow 0$ for some $C>0$ and $\beta>0$,
then necessarily $\beta\le 1$, and hence the distance exponent $p=2\beta$ in
Lemma \ref{lem:largeBW-powerlaw-local} satisfies $p\le 2$.
\end{lemma}

\begin{proof}
By Proposition \ref{prop:HBD thm}, complete monotonicity of $k$ implies that there exists a finite
positive Borel measure $\mu$ on $[0,\infty)$ such that
\[
k(r)=\int_0^\infty e^{-rt}\,d\mu(t)\qquad (r>0).
\]
Differentiating under the integral sign yields
\[
g(r)=-k'(r)=\int_0^\infty t\,e^{-rt}\,d\mu(t)\qquad (r>0).
\]
Since $\sup_{t\ge 0} t e^{-rt} = \frac{1}{er}$ for each $r>0$, we obtain
\[
g(r)\le \frac{1}{er}\int_0^\infty d\mu(t)=\frac{1}{er}\,k(0)\qquad (r>0),
\]
which proves the first claim.

Now assume $g(r)=C\,r^{-\beta}(1+o(1))$ as $r\downarrow 0$ with $\beta>0$.
If $\beta>1$, then $r\,g(r)\to\infty$ as $r\downarrow 0$, contradicting the bound
$r\,g(r)\le k(0)/e$ for all $r>0$. Hence $\beta\le 1$, and therefore $p=2\beta\le 2$.
\end{proof}

\begin{remark}[\textbf{Laplace, stretched exponential and Cauchy-type kernels as instances of Lemma \ref{lem:largeBW-powerlaw-local}}]
\label{rmk:powerlaw-instances}
 Recall that in Remark \ref{rmk:examples-CMF-experiments}, we showed that the Laplace and stretched exponential kernels as examples of radially symmetric, positive definite kernels. Below we show that they satisfy the hypotheses of Lemma \ref{lem:largeBW-powerlaw-local} above.

\emph{(i) Laplace kernel.} For the Laplace kernel, the profile is $k(q)=\exp(-\lambda\sqrt q)$ with $\lambda>0$, then
\[
g(r)=-k'(r)=\frac{\lambda}{2\sqrt r}\exp(-\lambda\sqrt r)
=\frac{\lambda}{2}\,r^{-1/2}\bigl(1+o(1)\bigr)\qquad (r\downarrow 0).
\]
Thus Lemma~\ref{lem:largeBW-powerlaw-local} applies with $\beta=\tfrac12$ (so $p=1$), yielding an inverse-distance
weighted local mean.

\emph{(ii) Stretched exponential kernel.} If $k(q)=\exp(-\lambda q^\alpha)$ with $\lambda>0$ and $\alpha\in(1/2,1)$, then
\[
g(r)=-k'(r)=\lambda\alpha\,r^{\alpha-1}\exp(-\lambda r^\alpha)
=\lambda\alpha\,r^{-(1-\alpha)}\bigl(1+o(1)\bigr)\qquad (r\downarrow 0).
\]
Thus Lemma~\ref{lem:largeBW-powerlaw-local} applies with $\beta=1-\alpha$ (so $p=2(1-\alpha)$), yielding the
distance-weighted local mean.

\emph{(iii) Cauchy-type kernel.}
For the completely monotone Cauchy-type profile
\[
k(q)=\frac{1}{1+q^{\alpha}}, 
\qquad 0<\alpha\le 1,
\]
with derivative
\[
g(q)=-k'(q)
=\frac{\alpha\, q^{\alpha-1}}{(1+q^{\alpha})^{2}}
\sim \alpha\, q^{\alpha-1}
=\alpha\, q^{-P/2}
\qquad (q\downarrow 0),
\]
where 
\[
P=2(1-\alpha),
\]
Lemma~\ref{lem:largeBW-powerlaw-local} applies with 
\[
\beta=1-\alpha=\frac{P}{2}.
\]

\end{remark}

%%%%%%%%%%%%%%%%%%%%%%%%%%%%%%%%%%%%%%%%%%%%%%%%%%%%%%%%%%%%%%%%%%%%%%%%%%%%%%%%%%%%%%%%%%%%%%%%%%%%%%%%%%%%%%%%%%%%%%%%%%%%%%%%%%%
\subsection{Experiments with synthetic data}

\subsection*{Large-bandwidth regime: $h = 10\|x_{\max}\|$}

We consider two well-separated Gaussian clusters in $\mathbb{R}^2$
($n=300$, separation $5.0$, noise $\sigma=0.35$) and fix
\[
h = 10\|x_{\max}\|.
\]
This choice \textit{intentionally} places the experiment in the \textit{extreme large-bandwidth regime} predicted by the theory developed in earlier sections. All kernels use exclude-self profile mean shift with fixed numerical
regularization $\varepsilon_q=10^{-12}$ for singular profiles.
Mode-merging tolerances are fixed once per kernel family and are not tuned per run.

\paragraph{\textbf{Gaussian kernel:} asymptotic weight homogenization and collapse.}

For the Gaussian profile $k(q)=e^{-q/2}$,
\[
K=1, \qquad \text{accuracy}=50\%.
\]
All trajectories converge in approximately $6$ iterations to a single
global barycenter located between the two true clusters.
%The row-normalized weight matrices become nearly uniform across all indices (after diagonal masking), and the block structure disappears entirely.

This behavior is consistent with the large-bandwidth asymptotics:
when $h \gg$ inter-cluster separation, 
\[
q_{ij}=\frac{\|x_i-x_j\|^2}{h^2} \ll 1
\quad\Longrightarrow\quad
g(q_{ij}) \approx \tfrac12,
\]
so all weights become nearly constant.
The update therefore reduces to a \textit{global averaging step}, forcing collapse to a \textit{single mode}.
The collapse is therefore a consequence of weight homogenization,
not of mean shift per se.

\paragraph{\textbf{Laplace kernel: persistence of block structure.}}

For the completely monotone Laplace profile
\[
k(q)=e^{-\sqrt{q}},
\]
we obtain
\[
K=2, \qquad \text{accuracy}=100\%.
\]
Although convergence is slower (average $\approx 193$ iterations),
the trajectories remain confined to their respective clusters.
No collapse occurs.

\paragraph{\textbf{Cauchy-type kernel: power-law singular reinforcement.}}

For the completely monotone Cauchy-type profile
\[
k(q)=\frac{1}{1+q^\alpha},
\qquad \alpha = 1-\frac{P}{2}, \quad P=1.2,
\]
with derivative
\[
g(q)=\alpha\,\frac{q^{\alpha-1}}{(1+q^\alpha)^2}
\sim \alpha q^{-P/2} \quad (q\downarrow 0),
\]
we again obtain
\[
K=2, \qquad \text{accuracy}=100\%.
\]
Convergence is substantially faster than Laplace
(average $\approx 52$ iterations).
The block structure in the weight matrices is visibly sharper
than for the Gaussian case and remains stable over iterations.
The singular power-law behavior reinforces near-neighbor influence,
preventing homogenization of weights even when $h$ is very large. The collapse of the Gaussian case is consistent with near-homogenization of the row-normalized weights at this scale.

\paragraph{Interpretation.}

At $h = 10\|x_{\max}\|$ we observe:

\begin{itemize}
\item \textbf{Gaussian profile:} near-constant weights $\Rightarrow$ global averaging $\Rightarrow$ single basin.
\item \textbf{Laplace (CM):} non-constant relative weighting $\Rightarrow$ two stable basins.
\item \textbf{Cauchy-type:} singular reinforcement of local influence $\Rightarrow$ two stable basins with faster convergence.
\end{itemize}

Thus, large-bandwidth collapse is not an inherent property of mean shift,
but a consequence of exponential weight flattening.
Completely monotone power-law profiles maintain sufficient relative
weight asymmetry to preserve multiple basins even in an extreme
global-bandwidth regime.

\paragraph{\textbf{Distribution-level comparison via weak distances.}}

In addition to reporting the number of merged modes and clustering
accuracy, we complement the above analysis with a distribution-level
comparison based on weak distances between empirical distribution
functions. This follows the idea of distribution-function-based
predictive accuracy discussed in \cite{GiudiciRaffinetti2025}, see also the relevant paper \cite{GuidottiRGA2025}.

In the clustering setting considered here the class labels are nominal, so rank-based comparisons of the labels themselves are not meaningful. Since cluster labels are arbitrary up to permutation, direct comparison of label values is not well defined; comparing the empirical distributions of cluster sizes instead yields a permutation-invariant summary of the clustering structure.
Instead we compare the empirical cumulative distribution functions
(CDFs) of the \emph{normalized cluster-size distributions}.
Let
\[
p=\left\{\frac{|C_1|}{n},\ldots,\frac{|C_K|}{n}\right\}
\]
denote the normalized sizes of the predicted clusters and
\[
q=\left\{\frac{|T_1|}{n},\ldots,\frac{|T_m|}{n}\right\}
\]
the normalized sizes of the ground-truth classes. Here $C_1,\ldots,C_K$ denote the clusters produced by mean shift, $|C_i|$ is the number of data points assigned to cluster $C_i$, and $T_1,\ldots,T_m$ denote the ground-truth classes with sizes $|T_j|$.
We then compute the Cram\'er--von Mises (CvM) weak distance
\[
\mathrm{CvM}(F_p,F_q)
=
\int_0^1 (F_p(u)-F_q(u))^2\,du .
\]

\nd Since $p$ and $q$ are normalized cluster-size distributions, their cumulative distribution functions are supported on $[0,1]$, which motivates the integration range.

\nd For the present synthetic experiment the ground-truth classes are
balanced, $q=(0.5,0.5)$. The resulting CvM discrepancies (\cite{GiudiciRaffinetti2025}) are

\[
\begin{array}{lccc}
\text{Kernel} & K & \text{Accuracy} & \mathrm{CvM}(F_p,F_q)\\
\hline
\text{Gaussian} & 1 & 50\% & 0.5 \\
\text{Laplace} & 2 & 100\% & 0.0 \\
\text{Cauchy-type} & 2 & 100\% & 0.0 .
\end{array}
\]

Thus the weak-distance comparison is consistent with the qualitative
behavior observed above: the Gaussian kernel collapses to a single
cluster, producing a substantial discrepancy between predicted and
true cluster-size distributions, whereas both the Laplace and
Cauchy-type kernels recover two balanced clusters and therefore yield
vanishing CvM discrepancy.
\noindent
Figure \ref{fig:large_bandwidth_comparison_synthetic} below shows the trajectory plots for the large-bandwidth experiment $h = 10\|x_{\max}\|$: first, the Gaussian kernel illustrating collapse to a single basin, followed by the corresponding Laplace and Cauchy-type kernel results demonstrating non-collapse and correct recovery of two clusters.

\begin{figure}[ht]
\centering

\begin{subfigure}[t]{0.45\linewidth}
    \centering
    \includegraphics[width=\linewidth]{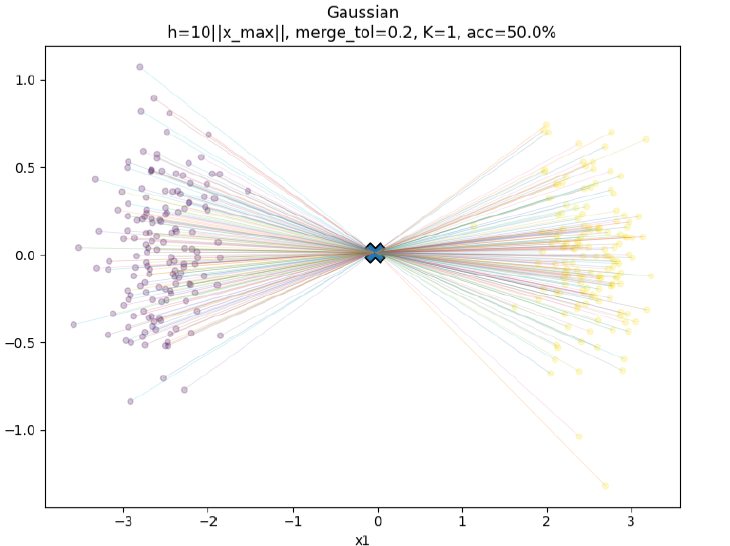}
    \caption{Gaussian: synthetic data (collapse)}
\end{subfigure}
\hfill
\begin{subfigure}[t]{0.45\linewidth}
    \centering
    \includegraphics[width=\linewidth]{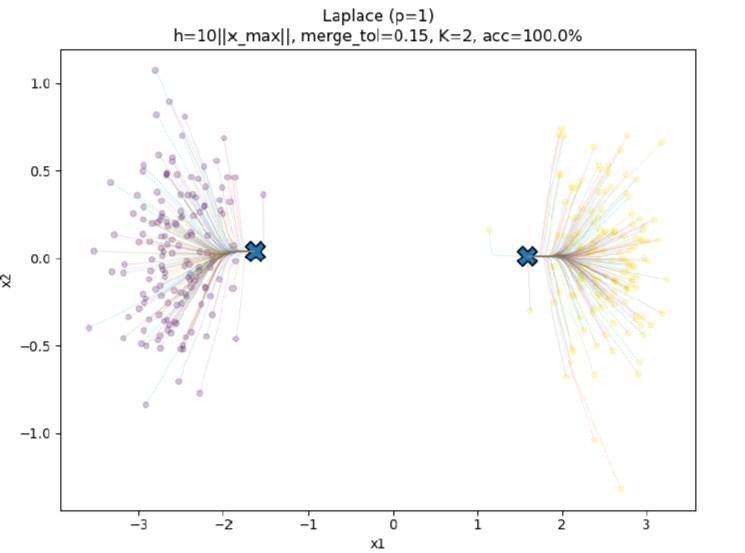}
    \caption{Laplace: synthetic data (non-collapse)}
\end{subfigure}
\hfill
\begin{subfigure}[t]{0.45\linewidth}
    \centering
    \includegraphics[width=\linewidth]{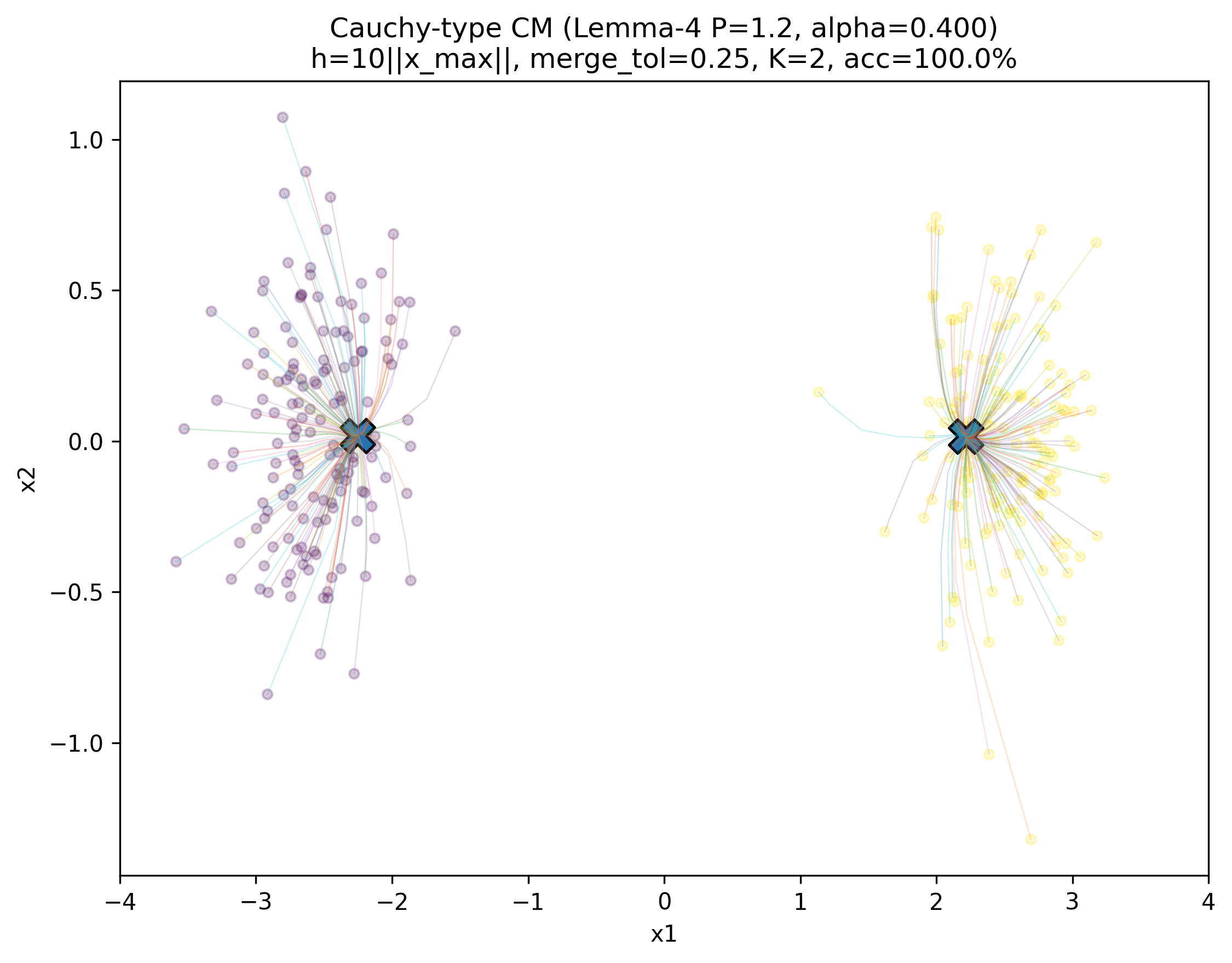}
    \caption{Cauchy-type: synthetic data (non-collapse)}
\end{subfigure}

\caption{Large-bandwidth experiment $h = 10\|x_{\max}\|$: Gaussian collapse versus Laplace and Cauchy-type kernels for synthetic data.}
\label{fig:large_bandwidth_comparison_synthetic}

\end{figure}

\subsection{Experiments with real data}
\subsection{Iris dataset}
\subsection*{Iris dataset: extreme large-bandwidth regime $h = 10\|x_{\max}\|$}

We consider the full Iris dataset ($n=150$, three ground-truth classes),
after feature-wise $z$-score normalization.
No PCA is used for the dynamics.
We fix
\[
R = \max_i \|x_i\|, 
\qquad 
h = 10R,
\]
which yields $R = 3.537642$ and $h = 35.376423$.
All experiments use the standard exclude-self mean shift update, 
since for Laplace and power-law Cauchy-type profiles 
the weight function $g(q)=-k'(q)$ behaves like $q^{\alpha-1}$ 
near $q=0$ and is singular when $\alpha<1$. 
Thus, the self-weight would be \textit{infinite}.
Therefore, the current iterate is removed from the weighted average at each step.

We use a fixed mode-merging tolerance $\texttt{merge\_tol}=0.05$.

Weights are always defined as $g(q)=-k'(q)$, as in the MS algorithm described in Section \ref{scn:Mathematical description of the MS Algorithm}. In what follows, we report the results from Gaussian, Laplace and Cauchy-type kernels. Figure \ref{fig:iris} demonstrates the trajectories.

\paragraph{\textbf{Gaussian kernel (exponential profile).}}

For the Gaussian profile $k(q)=e^{-q/2}$ we obtain
\[
K_{\text{merged}} = 1, 
\qquad 
\mathrm{ARI}=0.0000,
\qquad 
\mathrm{accuracy}=0.3333.
\]
All $150$ trajectories converge to a single global barycenter.
The trajectory plots (20 seeds per true class) show complete collapse:
all paths contract toward a single interior point and the merged mode size is $150$.
The weight matrix $W_{ij}=g(\|x_i-x_j\|^2/h^2)$ is nearly constant off-diagonal:
both the raw and label-sorted heatmaps exhibit no visible block structure,
consistent with large-bandwidth weight homogenization and a single basin.

\paragraph{\textbf{Laplace kernel (completely monotone, exponential in $\sqrt{q}$).}}

For the Laplace profile $k(q)=e^{-\sqrt{q}}$ we obtain
\[
K_{\text{merged}} = 1, 
\qquad 
\mathrm{ARI}=0.0000,
\qquad 
\mathrm{accuracy}=0.3333.
\]
Trajectories again collapse to a single interior point, yielding one basin.
The raw and label-sorted weight-matrix heatmaps do not display a persistent
three-block structure aligned with the ground-truth classes at $h=10R$.
Accordingly, no class-level recovery occurs in this extreme global-bandwidth regime.

\paragraph{\textbf{Cauchy-type completely monotone kernel (power-law profile).}}

For the Cauchy-type profile
\[
k(q)=\frac{1}{1+q^\alpha}, 
\qquad 
\alpha = 1-\frac{P}{2}, 
\quad P=1.99,
\]
with derivative
\[
g(q)=\alpha\,\frac{q^{\alpha-1}}{(1+q^\alpha)^2},
\]
we obtain
\[
K_{\text{merged}} = 6,
\qquad 
\mathrm{ARI}=0.5148,
\qquad 
\mathrm{accuracy}=0.6733.
\]
The trajectory plots (20 seeds per true class) exhibit multiple distinct attraction
basins, i.e., no global collapse.
%Consistent with this, the weight matrices remain visibly non-uniform (in both raw and label-sorted order), indicating persistent relative weight asymmetry at $h=10R$.
The six merged basins are not perfectly aligned with the three ground-truth classes,
but they yield substantially higher agreement than the collapsed cases.

\paragraph{Summary of large-bandwidth behavior on Iris.}

At $h = 10\|x_{\max}\|$:
\begin{itemize}
\item \textbf{Gaussian:} $K=1$, ARI $=0.0000$, accuracy $=0.3333$; trajectories collapse; weights homogenize.
\item \textbf{Laplace:} $K=1$, ARI $=0.0000$, accuracy $=0.3333$; trajectories collapse; no persistent label-block structure.
\item \textbf{Cauchy-type CM:} $K=6$, ARI $=0.5148$, accuracy $=0.6733$; multiple basins persist; weights remain non-uniform.
\end{itemize}

\paragraph{\textbf{Distribution-level comparison via weak distances.}}

To complement the ARI and clustering accuracy reported above, we also
perform the same distribution-level comparison used in the synthetic-data
experiment, namely the Cram\'er--von Mises (CvM) weak distance between the
empirical distribution functions of the normalized predicted cluster-size
distribution and the normalized ground-truth class-size distribution.
The definition of the weak distance and the notation $F_p,F_q$ were
introduced in the synthetic-data section and are therefore not repeated here.

For the Iris dataset the three ground-truth classes are balanced,
so the normalized class-size distribution is
\[
q=(1/3,\,1/3,\,1/3).
\]

For the Gaussian and Laplace kernels the algorithm collapses to a single
merged basin containing all $150$ observations, yielding the predicted
cluster-size distribution
\[
p=(1).
\]

For the Cauchy-type kernel the algorithm produces $K=6$ merged basins.
The corresponding predicted cluster-size distribution is determined by
the merged-mode sizes returned by the endpoint-merging procedure.

The resulting CvM discrepancies are summarized below.

\[
\begin{array}{lccc}
\text{Kernel} & K & \text{Accuracy} & \mathrm{CvM}(F_p,F_q)\\
\hline
\text{Gaussian} & 1 & 0.3333 & 0.3333 \\
\text{Laplace} & 1 & 0.3333 & 0.3333 \\
\text{Cauchy-type} & 6 & 0.6733 & 0.2126
\end{array}
\]

Thus the weak-distance comparison is consistent with the qualitative
behavior observed in the trajectory plots: the Gaussian and Laplace
kernels collapse to a single basin, producing a substantial discrepancy
between predicted and true class-size distributions, whereas the
Cauchy-type kernel maintains multiple attraction basins and therefore
yields a smaller weak-distance discrepancy.

Figure \ref{fig:iris} depicts the trajectories for these three kernels.

\begin{figure}[ht]
\centering

\begin{subfigure}[t]{0.45\linewidth}
    \centering
    \includegraphics[width=\linewidth]{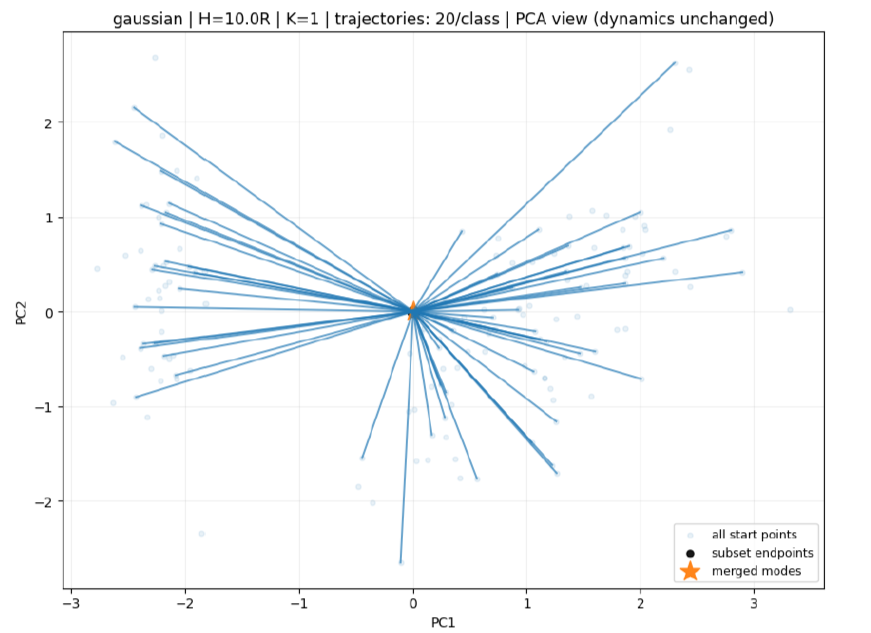}
    \caption{Gaussian MS on Iris (collapse)}
\end{subfigure}
\hfill
\begin{subfigure}[t]{0.45\linewidth}
    \centering
    \includegraphics[width=\linewidth]{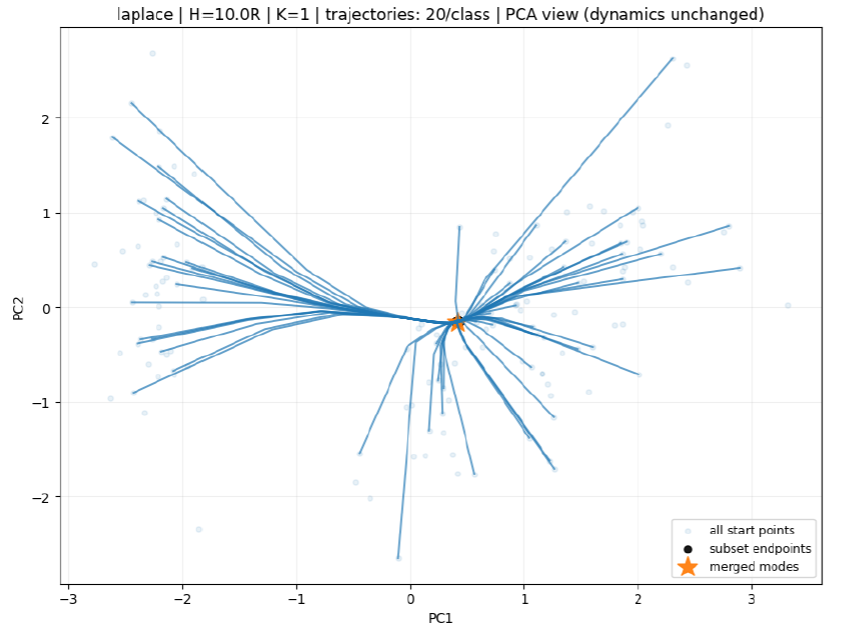}
    \caption{Laplace MS on Iris (collapse)}
\end{subfigure}
\hfill
\begin{subfigure}[t]{0.45\linewidth}
    \centering
    \includegraphics[width=\linewidth]{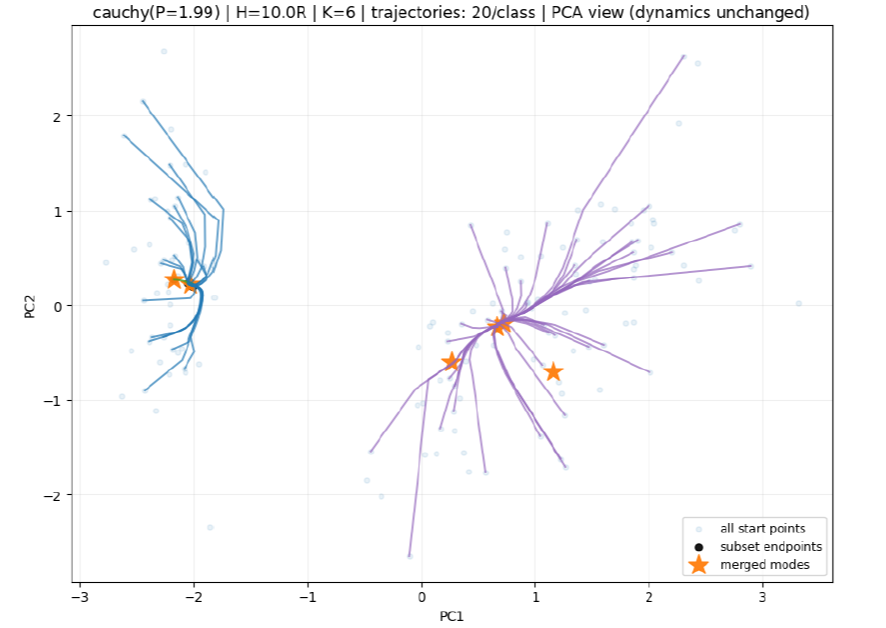}
    \caption{Cauchy-type on MS on Iris (non-collapse)}
\end{subfigure}

\caption{Large-bandwidth experiment $h = 10\|x_{\max}\|$ on Iris: Gaussian and Laplace kernel collapse Cauchy-type non-collapse.}
\label{fig:large_bandwidth_comparison}
\label{fig:iris}
\end{figure}
%%%%%%%%%%%%%%%%%%%%%%%%%%%%%%%%%%%%%%%%%%%%%%%%%%%%%%%%%%%%%%%%%%%%%%%%%%%%%%%%%%%%%%%%%%%%%%%%%%%%%%%%%%%%%%%%%%%%%%%%%%%%%%%%ù
\subsection{Wheat Seeds Dataset}

We consider the UCI Wheat Seeds dataset ($n=210$, $D=7$) with three balanced classes ($70$ samples each). All features are standardized by $z$-score normalization and the data are projected onto $\mathrm{PCA}(2)$. Mean shift is then performed in this two-dimensional space using derivative weights
\[
W_{ij}=g(q_{ij}), \qquad
q_{ij}=\frac{\|x_i-x_j\|^2}{H^2}, \qquad
g(q)=-k'(q), \qquad x_i \in \cl^2 \text{ are PCA coordinates}.
\]
The bandwidth is chosen as
\[
H = 10\max_i\|x_i\|,
\]
with the norm computed in the $\mathrm{PCA}(2)$ space, placing the experiment in the extreme large-bandwidth regime.

\paragraph{Gaussian kernel.}
For the Gaussian profile
\[
k(q)=e^{-q/2}, \qquad g(q)=-k'(q)=\tfrac12 e^{-q/2},
\]
all trajectories collapse to a single attractor. After merging endpoints we obtain
\[
K_{\mathrm{merged}}=1,
\qquad
\mathrm{ARI}=0.0000,
\qquad
\mathrm{accuracy}=0.3333,
\]
with cluster size distribution $[210]$.

\paragraph{Cauchy-type completely monotone kernel.}
We next consider the Cauchy-type family
\[
k(q)=\frac{1}{1+q^\alpha}, \qquad
\alpha = 1-\frac{P}{2}, \qquad P=1.99,
\]
with derivative weights
\[
g(q)=-k'(q)=
\alpha\frac{q^{\alpha-1}}{(1+q^\alpha)^2}.
\]
In this case the trajectories do not collapse globally. After endpoint merging we obtain
\[
K_{\mathrm{merged}}=22,
\qquad
\mathrm{ARI}=0.3721,
\qquad
\mathrm{accuracy}=0.9048.
\]
The merged cluster sizes are
\[
[60,32,18,14,13,11,9,8,7,6,6,5,5,3,3,2,2,2,1,1,1,1].
\]

\paragraph{Interpretation.}
In the large-bandwidth regime the Gaussian kernel collapses to a single basin of attraction, producing no meaningful clustering structure. The Cauchy-type kernel avoids this collapse and preserves multiple attraction basins. Although this leads to over-segmentation relative to the three true classes, many of the resulting basins remain class-pure, which explains the high many-to-one accuracy.

\paragraph{\textbf{Distribution-level comparison via CvM distance.}}
To complement ARI and clustering accuracy, we also compare the empirical distribution of predicted cluster sizes with the ground-truth class-size distribution. Since the three true classes are balanced, the normalized ground-truth distribution is
\[
q=(1/3,\,1/3,\,1/3).
\]
Let $p$ denote the normalized predicted cluster-size distribution. We measure the discrepancy between the empirical cumulative distribution functions $F_p$ and $F_q$ using the Cramér–von Mises weak distance (cf. \cite{GiudiciRaffinetti2025})
\[
\mathrm{CvM}(F_p,F_q)=\int_{0}^{1} (F_p(u)-F_q(u))^2\,du .
\]

\[
\begin{array}{lcccc}
\text{Kernel} & K_{\mathrm{merged}} & \mathrm{ARI} & \mathrm{accuracy} & \mathrm{CvM}(F_p,F_q) \\
\hline
\text{Gaussian} & 1 & 0.0000 & 0.3333 & 0.3333 \\
\text{Cauchy-type} & 22 & 0.3721 & 0.9048 & 0.2376
\end{array}
\]

The CvM distance therefore provides a complementary distribution-level comparison. The Gaussian collapse produces the largest discrepancy between predicted and true cluster-size distributions, whereas the Cauchy-type kernel yields a smaller discrepancy by maintaining multiple attraction basins.

\begin{figure}[ht]
\centering

\begin{subfigure}{0.45\linewidth}
\centering
\includegraphics[width=\linewidth]{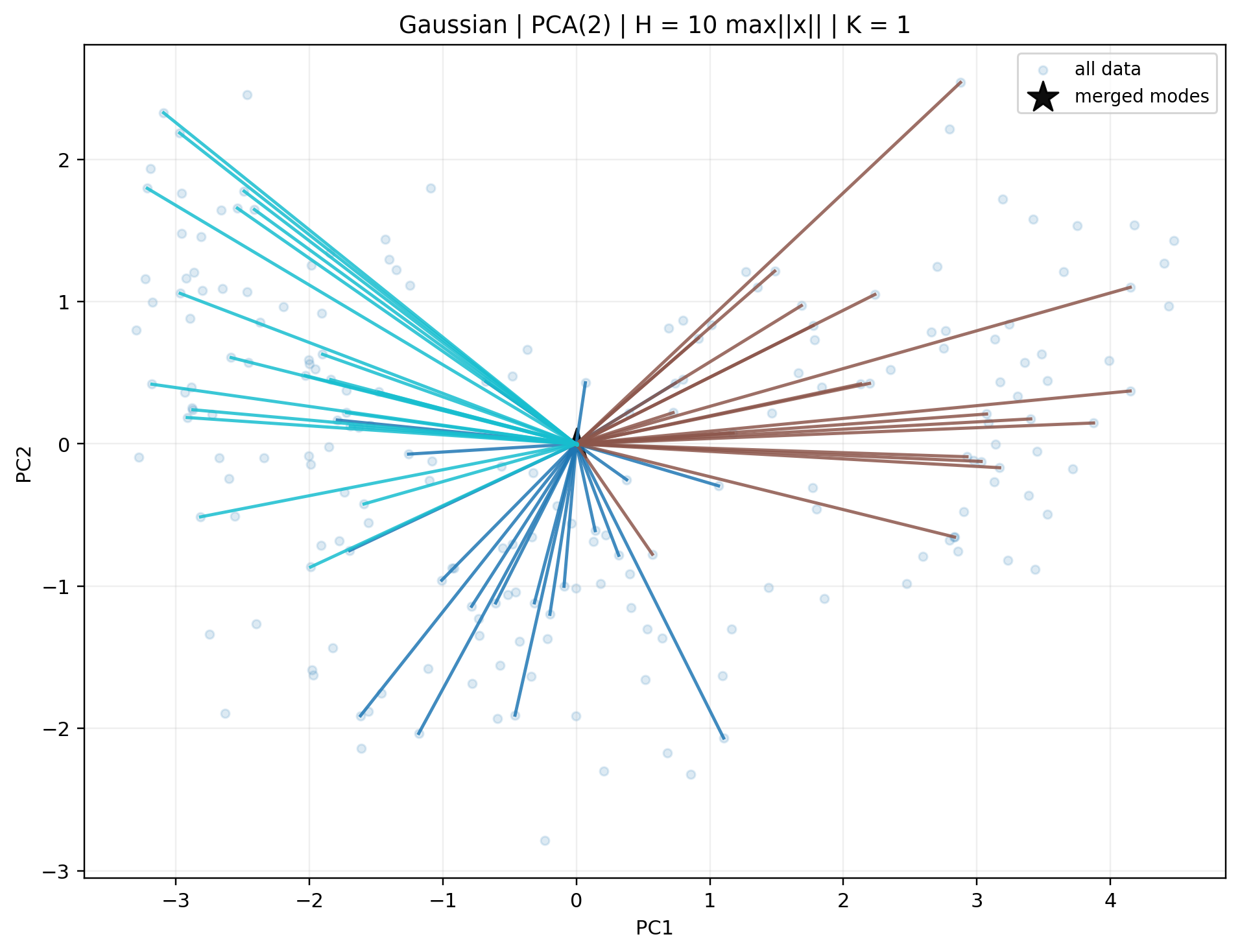}
\caption{Gaussian on Wheat Seed ($K=1$)}
\end{subfigure}
\hfill
\begin{subfigure}{0.45\linewidth}
\centering
\includegraphics[width=\linewidth]{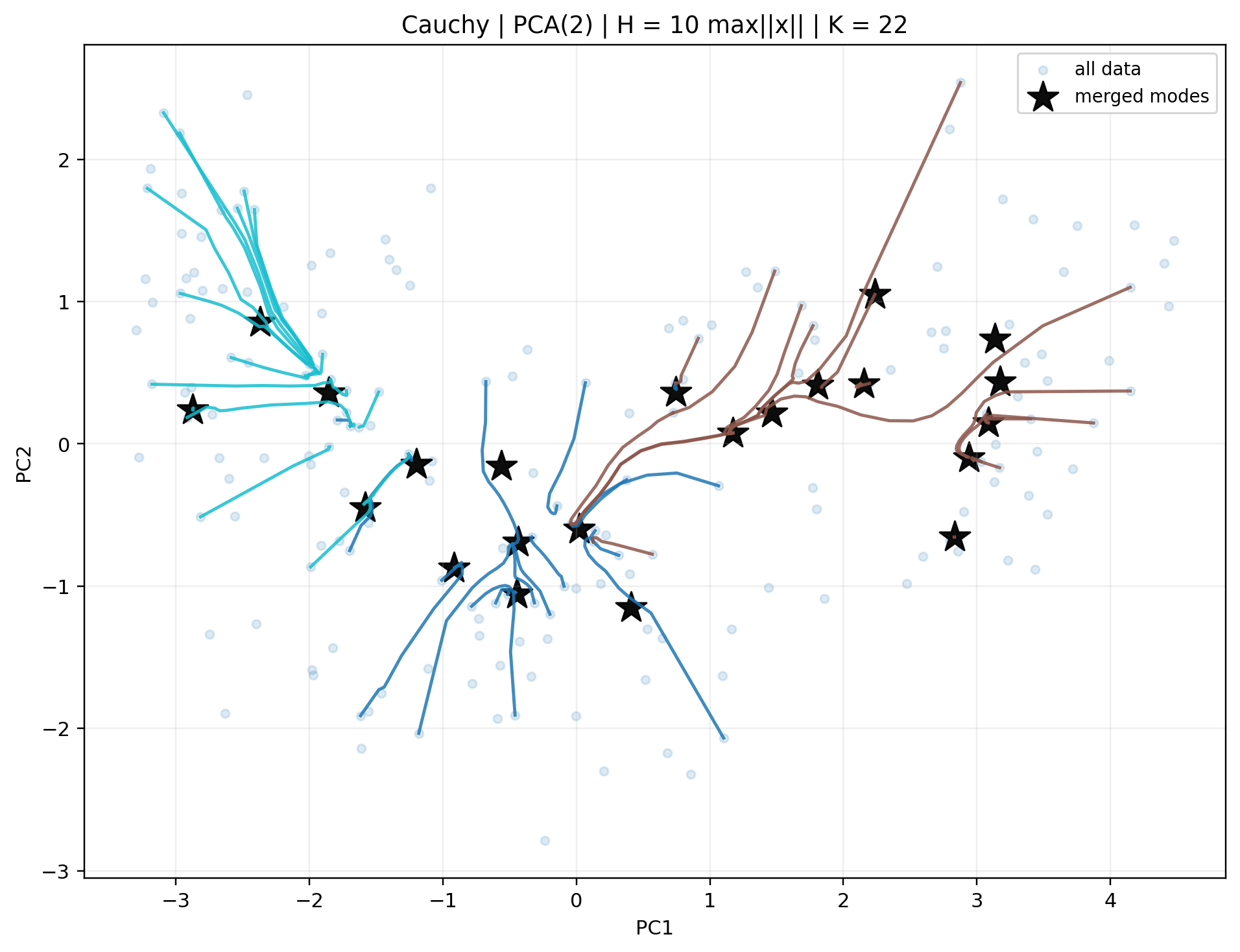}
\caption{Cauchy-type CM on Wheat Seed ($K=22$)}
\end{subfigure}

\caption{Mean-shift trajectories in the PCA$(2)$ projection of Wheat Seed data at $H = 10\max_i \|x_i\|$.}
\label{fig:pca2_panel}

\end{figure}

\subsection{Image segmentation experiment: grayscale elephant image}

To assess the behavior of the completely monotone\textit{ Cauchy-type kernel} on real data, we conducted a mean shift segmentation experiment on a grayscale image of an elephant (see Figure~\ref{fig:elephant_segmentation}). The image was treated as a point cloud in feature space, and clustering was performed using the Cauchy-type profile
\[
k(q) = \frac{1}{1 + q^\alpha}, 
\qquad \alpha = 1 - \frac{P}{2},
\]
with parameter $P=1.99$ (close to the upper limit $P<2$ allowed by the completely monotone framework, cf. Lemma \ref{lem:CM-bound-p}). 

We deliberately did not use the Laplace kernel in this experiment. In earlier Iris experiment, the Laplace profile showed weaker separation properties at large bandwidths. The Gaussian profile was also excluded due to its demonstrated large-bandwidth collapse behavior even for synthetic datasets. The Cauchy-type profile was selected because its power-law singularity near $q=0$ preserves stronger local weighting and mitigates global weight homogenization.

The experiment was run with bandwidth multiplier $H_{\text{mult}}=10$ (so just like the previous two experiments, bandwidth $h:=10\norm{x}_{max}$) and $k$-nearest-neighbor restriction $k=300$. After convergence, mode merging was performed using a tolerance determined from the median nearest-neighbor distance (median $\approx 0.1842$, merge tolerance $\approx 0.7368$), resulting in $K_{\text{merged}}=174$ clusters. A subsequent intensity-based dark-mode selection (quantile threshold $0.35$) was applied to produce a binary segmentation mask.

The resulting segmentation mask visually separates the elephant from substantial portions of the background without artificial collapse into a single basin. Since no ground-truth segmentation labels are available for this example, we report the result qualitatively. This experiment demonstrates that, in contrast to the Gaussian case, the Cauchy-type completely monotone kernel maintains sufficient locality to produce structured multi-basin behavior on real grayscale image data.

Here are the original elephant image and its segmented image.

\begin{figure}[ht]
\centering

\begin{subfigure}[t]{0.48\linewidth}
    \centering
    \includegraphics[width=\linewidth]{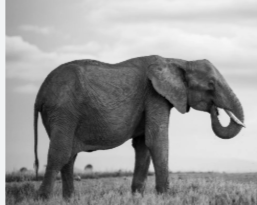}
    \caption{Elephant: original image}
\end{subfigure}
\hfill
\begin{subfigure}[t]{0.48\linewidth}
    \centering
    \includegraphics[width=\linewidth]{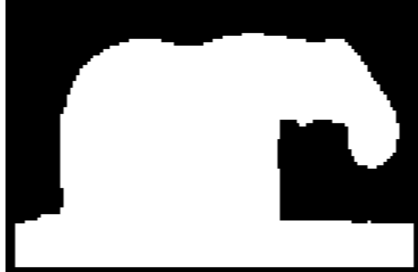}
    \caption{Elephant: binary segmentation mask}
\end{subfigure}

\caption{Grayscale elephant image and its segmentation obtained using the Cauchy-type kernel.}
\label{fig:elephant_segmentation}
\end{figure}

\nd N.B. Since no ground-truth segmentation is available in this experiment, we do not report a distribution-level CvM comparison (cf. \cite{GiudiciRaffinetti2025}) here.

%%%%%%%%%%%%%%%%%%%%%%%%%%%%%%%%%%%%%%%%%%%%%%%%%%%%%%%%%%%%%%%%%%%%%%%%%%%%%%%%%%%%%%%%%%%%%%%%%%%%%%%%%%%%%%%%%%%%%%%%%%%%%%%%%%%%%%%%%%%%%%%%%%%%%%%%%%%%%%%%%%%%%%%%%%%%%%%%%%
\section*{Funding}
This research was supported by Research Foundation – Flanders (FWO) under the Odysseus II programme (Grant No. G0DBZ23N).
\section*{Conflict of interest statement}
The corresponding author states that there is no conflict of interest.
\section*{Data Availability Statement}
The corresponding author states that all the data used in our experiments are publicly available datasets.

\end{document}